\documentclass{article}

\usepackage{microtype}
\usepackage{graphicx}
\usepackage{booktabs} % for professional tables

\usepackage{hyperref}

\usepackage[accepted]{icml2025}

\usepackage{amsmath}
\usepackage{amssymb}
\usepackage{mathtools}
\usepackage{amsthm}
\usepackage{cancel}
\usepackage{graphicx}
\usepackage{subfig}      % This is enough; no need subcaption

\usepackage[capitalize,noabbrev,nosort]{cleveref}
\crefname{assumption}{Assumption}{Assumptions}

\usepackage{amsmath}
\usepackage{amsthm}
\makeatletter
\newtheorem*{rep@theorem}{\rep@title}
\newcommand{\newreptheorem}[2]{%
\newenvironment{rep#1}[1]{%
 \def\rep@title{#2 \ref{##1}}%
 \begin{rep@theorem}}%
 {\end{rep@theorem}}}
\makeatother

\usepackage{thmtools,thm-restate}
\theoremstyle{plain}

\newtheorem{proposition}{Proposition}

\newtheorem{assumption}{Assumption}

\newtheorem{lemma}{Lemma}

\newtheorem{remark}{Remark}
\newreptheorem{proposition}{Proposition}
\newreptheorem{corollary}{Corollary}
\DeclareMathOperator{\mb}{mb}
\DeclareMathOperator{\pa}{pa}
\DeclareMathOperator{\ch}{ch}
\DeclareMathOperator{\sps}{sps}

\usepackage{enumitem}
\usepackage{multirow}
\newenvironment{proofsketch}{%
  \proof}{\endproof}

\newcommand\independent{\protect\mathpalette{\protect\independenT}{\perp}}
    \def\independenT#1#2{\mathrel{\rlap{$#1#2$}\mkern2mu{#1#2}}}

\usepackage{tikz}
\usetikzlibrary{arrows,fit,positioning,shapes,bayesnet}
\usetikzlibrary{automata}
\newdimen\arrowsize
\pgfarrowsdeclare{arcsq}{arcsq}
{
	\arrowsize=0.2pt
	\advance\arrowsize by .5\pgflinewidth
	\pgfarrowsleftextend{-4\arrowsize-.5\pgflinewidth}
	\pgfarrowsrightextend{.5\pgflinewidth}
}
{
	\arrowsize=1.5pt
	\advance\arrowsize by .5\pgflinewidth
	\pgfsetdash{}{0pt} % do not dash
	\pgfsetroundjoin   % fix join
	\pgfsetroundcap    % fix cap
	\pgfpathmoveto{\pgfpoint{0\arrowsize}{0\arrowsize}}
	\pgfpatharc{-90}{-140}{4\arrowsize}
	\pgfusepathqstroke
	\pgfpathmoveto{\pgfpointorigin}
	\pgfpatharc{90}{140}{4\arrowsize}
	\pgfusepathqstroke
}

\tikzset{
    block/.style={
        circle, draw, fill=white
        },
    myarrow/.style={
        single arrow,  % fill=black!20, 
        draw, 
        single arrow head extend=2mm, minimum width=30pt,
        },    
    myar/.style={
        rounded corners=2pt, fill=black!20, 
        },
    mytri/.style={
        isosceles triangle, anchor=apex,
        isosceles triangle apex angle=90,
        minimum width=50pt
        },
    }

\usepackage[textsize=tiny]{todonotes}

\icmltitlerunning{A General Representation-Based Approach to Multi-Source Domain Adaptation}

\begin{document}

\twocolumn[
\icmltitle{A General Representation-Based Approach to Multi-Source Domain Adaptation}

% It is OKAY to include author information, even for blind
% submissions: the style file will automatically remove it for you
% unless you've provided the [accepted] option to the icml2024
% package.

% List of affiliations: The first argument should be a (short)
% identifier you will use later to specify author affiliations
% Academic affiliations should list Department, University, City, Region, Country
% Industry affiliations should list Company, City, Region, Country

% You can specify symbols, otherwise they are numbered in order.
% Ideally, you should not use this facility. Affiliations will be numbered
% in order of appearance and this is the preferred way.
\icmlsetsymbol{equal}{*}

\begin{icmlauthorlist}
\icmlauthor{Ignavier Ng}{equal,cmu}
\icmlauthor{Yan Li}{equal,mbz}
\icmlauthor{Zijian Li}{cmu,mbz}
\icmlauthor{Yujia Zheng}{cmu}
\icmlauthor{Guangyi Chen}{cmu,mbz}
\icmlauthor{Kun Zhang}{cmu,mbz}
\end{icmlauthorlist}

\icmlaffiliation{cmu}{Carnegie Mellon University}
\icmlaffiliation{mbz}{Mohamed bin Zayed University of Artificial Intelligence}

% \icmlcorrespondingauthor{}{}
% \icmlcorrespondingauthor{Firstname2 Lastname2}{first2.last2@www.uk}

% You may provide any keywords that you
% find helpful for describing your paper; these are used to populate
% the "keywords" metadata in the PDF but will not be shown in the document
\icmlkeywords{Machine Learning, ICML}

\vskip 0.3in
]

% this must go after the closing bracket ] following \twocolumn[ ...

% This command actually creates the footnote in the first column
% listing the affiliations and the copyright notice.
% The command takes one argument, which is text to display at the start of the footnote.
% The \icmlEqualContribution command is standard text for equal contribution.
% Remove it (just {}) if you do not need this facility.

%\printAffiliationsAndNotice{}  % leave blank if no need to mention equal contribution
\printAffiliationsAndNotice{\icmlEqualContribution} % otherwise use the standard text.

\begin{abstract}
A central problem in unsupervised domain adaptation is determining what to transfer from labeled source domains to an unlabeled target domain. To handle high-dimensional observations (e.g., images), a line of approaches use deep learning to learn latent representations of the observations, which facilitate knowledge transfer in the latent space. However, existing approaches often rely on restrictive assumptions to establish identifiability of the joint distribution in the target domain, such as independent latent variables or invariant label distributions, limiting their real-world applicability. In this work, we propose a general domain adaptation framework that learns compact latent representations to capture distribution shifts relative to the prediction task and address the fundamental question of what representations should be learned and transferred. Notably, we first demonstrate that learning representations based on all the predictive information, i.e., the label's Markov blanket in terms of the learned representations, is often underspecified in general settings. Instead, we show that, interestingly, general domain adaptation can be achieved by partitioning the representations of Markov blanket into those of the label's parents, children, and spouses. Moreover, its identifiability guarantee can be established. Building on these theoretical insights, we develop a practical, nonparametric approach for domain adaptation in a general setting, which can handle different types of distribution shifts.
\end{abstract}

\section{Introduction}

Unsupervised domain adaptation aims to transfer knowledge from labeled source domains to an unlabeled target domain, particularly in scenarios where the training and testing data distributions differ substantially. In a multi-source domain adaptation (MSDA) setup, each source domain $u \in \{1, \dots, M\}$ provides access to a labeled dataset $(\mathbf{x}^{(u)}, \mathbf{y}^{(u)}) = \{(\mathbf{x}_k^{(u)}, y_k^{(u)})\}_{k=1}^{m_u}$, where $m_u$ represents the number of samples in domain $u$. Here, the $i$-th dimension of the feature vector $X$ is denoted as $X_i$, and $x_{ik}^{(u)}$ corresponds to the value of the $i$-th feature for the $k$-th sample in domain $u$. The goal is to train a classifier that generalizes to an unlabeled target domain, where only the feature vectors $\mathbf{x}^\tau = \{\mathbf{x}_k^\tau\}_{k=1}^m$ are available.

Determining the joint distribution $P^\tau_{X,Y}$ in the target domain based solely on the marginal distribution $P^\tau_{X}$ is a fundamentally underdetermined problem. In the absence of additional assumptions, there are infinitely many possible joint distributions $P^\tau_{X,Y}$ that can align with the observed marginal distribution. Therefore, assumptions that connect the source and target domain distributions are essential for identifying the target joint distribution. Common approaches impose constraints to ensure a degree of similarity across these distributions. A widely adopted assumption is covariate shift \citep{pan2009survey}, which asserts that the conditional distribution $P_{Y\mid X}$ remains consistent across domains while the marginal feature distribution $P_{X}$ varies. Alternatively, other frameworks account for variations in $P_{Y}$ or assume that transformations between the source and target features are linear \citep{zhang2015multisource}, offering additional ways to model domain relationships.

To avoid restrictive parametric assumptions about the relationships between domains, the principle of minimal changes is often considered \citep{scholkopf2012causal, zhang2013domain}. This perspective is particularly effective when analyzed through the lens of the data generating process. For instance, when the underlying process is $Y \rightarrow X$, the conditional distributions $P_Y$ and $P_{X\mid Y}$ can vary independently across domains. By factoring the joint distribution in this way, domain shifts can be represented in a parsimonious and structured manner. Moreover, changes in $P_{X\mid Y}$ are often constrained to lie on a low-dimensional manifold, further simplifying the problem \citep{stojanov2019data}. Advances in domain adaptation frameworks, particularly those leveraging multiple-domain data, have demonstrated the feasibility of uncovering the data-generating process and capturing these domain shifts \citep{huang2020causal, zhang2020domain}.\looseness=-1

With the increasing capabilities of deep learning, another prominent line of work leverages neural architectures to map high-dimensional features into a latent representation space, ensuring that the latent variables $Z$ are marginally invariant across domains. This approach is motivated by efficiency: by working in a lower-dimensional latent space, it aligns with the principle of minimal changes, as it only models the essential domain shifts while discarding irrelevant variations. A classifier can then be trained on the labeled source data to ensure that the latent space retains predictive information about the labels~\citep{ben2010theory,ganin2015unsupervised,zhao2018adversarial,li2024multi}. While this strategy enables domain alignment in the latent space, the joint distributions $P_{Z,Y}$ may still vary significantly across domains, potentially degrading performance in the target domain. To address this, several works employ generative models or disentanglement techniques for the latent representations~\citep{cai2019learning, lu2021invariant,yin2025integrating}. However, these methods typically rely on assumptions on the data distributions such as exponential family, or lack guarantees of identifiability for the target joint distribution $P^\tau_{X,Y}$ or the learned representations, limiting their ability to recover the true data generating process. The lack of identifiability raises concerns about the trustworthiness and reliability of these approaches, particularly when applied to real-world scenarios involving complex domain shifts.

Recent works by \citet{kong2022partial} and \citet{li2024subspace} have introduced theoretical frameworks that establish different types of identifiability results for latent representations in domain adaptation. They partition the latent space into different subspaces according to its connection with domains or labels. Although different types of identifiability results have been provided for identifying the latent representations and joint distribution in the target domain, these works often rely on restrictive assumptions such as independent latent variables or invariant label distributions,  limiting their real-world applicability.

In this work, we propose a general domain adaptation framework that learns compact latent representations to capture distribution shifts relative to the prediction task and address the fundamental question of what representations should be learned and transferred. 
Notably, we first demonstrate that learning representations based on all the predictive information, such as the label's Markov blanket in terms of the learned representations, is often underspecified for domain adaptation in general settings.
Instead, we show that, interestingly, general domain adaptation can be achieved by partitioning the representations of Markov blanket into those of the label's parents, children, and spouses. Accordingly, we establish identifiability of the joint distribution in the target domain, by learning low-dimensional representations of the changing distributions.
Building on these theoretical insights, we develop a practical, nonparametric framework for domain adaptation in a general setting, which can handle different types of distribution shifts. Finally, we validate our framework on real-world datasets, demonstrating that it outperforms existing methods.
\section{Related Works}
\subsection{Domain Adaptation}
Domain adaptation~\citep{patel2015visual,wilson2020survey,farahani2021brief} aims to transfer knowledge from labeled source domains to an unlabeled target domain, such that the model can generalize to the target domain. A classical approach is to learn domain-invariant representations~\citep{ganin2015unsupervised,bousmalis2016domain}, which are extracted by aligning the features across different domains. For instance, \citet{long2017deep,long2018conditional} applied maximum mean pseudo-labels and kernel methods for domain alignment, while \citet{tzeng2014deep} adopt an adaptation layer and domain confusion loss to learn domain-invariant representations.

A different line of works rely on the assumption that conditional distributions $P(Z\mid Y)$ remain stable across domains, enabling the extraction of domain-invariant representations for each class~\cite{chen2019progressive,chen2019joint,kang2020contrastive}. For instance, \citet{xie2018learning} minimize inter-class domain discrepancy, while \citet{shu2018dirt} constrains boundaries to avoid high-density regions via virtual adversarial domain adaptation. Target shift, where $P_Y$ varies across domains, has also been widely studied \cite{zhang2013domain,lipton2018detecting,wen2020domain,garg2020unified,roberts2022unsupervised}. For instance, \citet{tachet2020domain} developed theoretical guarantees for the transfer performance under generalized label shift, while \citet{shui2021aggregating} proposed selecting relevant source domains based on conditional distribution similarity.
  
Recent works incorporate causality into domain adaptation~\cite{kong2022partial,magliacane2018domain,teshima2020few,chen2021domain,gong2016domain,stojanov2019data}. For instance, \citet{zhang2013domain,zhang2015multisource} investigated target shift, conditional shift, and generalized target shift by assuming independent change for $P(Y)$ and $P(X\mid Y)$. \citet{cai2019learning} learned disentangled semantic representations by leveraging causal generation process, while \citet{stojanov2021domain} showed that domain-invariant features require domain knowledge, giving rise to their proposed domain-specific adversarial networks. These methods typically require restrictive assumptions and are not able to identify the latent variables with theoretical guarantees.

\subsection{Identification of Latent Variables}
The identifiability of latent variables remains a fundamental challenge, as they are generally unidentifiable without additional assumptions~\citep{hyvarinen1999nonlinear,locatello2019challenging}. In the case of a linear mapping from latent to observed variables—known as independent component analysis (ICA)—identifiability can be achieved by assuming non-Gaussian latent variables \citep{comon1994independent,hyvarinen2002independent}. However, relaxing the linearity assumption leads to the ill-posed problem of nonlinear ICA \citep{hyvarinen1999nonlinear,hyvarinen2023nonlinear}.

To address this, existing nonlinear ICA methods typically rely on sufficient variations in the latent variable distribution, often introduced through auxiliary variables such as time or domain indices~\citep{hyvarinen2016unsupervised,hyvarinen2017nonlinear,hyvarinen2019nonlinear,khemakhem2020variational}. Alternative approaches constrain the mixing function, either by restricting it to specific function classes~\citep{hyvarinen1999nonlinear,taleb1999source,gresele2021independent,buchholz2022function} or enforcing sparsity~\citep{zheng2022identifiability}.

More recently, causal representation learning has extended beyond ICA by considering causally-related latent variables instead of independent ones \citep{scholkopf2021causal}. Similar to nonlinear ICA, many approaches in this area leverage sufficient variations in the latent variable distributions, typically induced by interventions~\citep{ahuja2023interventional,squires2023linear,vonkugelgen2023nonparametric,jiang2023learning,zhang2023identifiability,varici2023score,varici2024scorebased,varici2024linear,jin2023learning,bing2024identifying,zhang2024causal}, temporal data~\citep{yao2022temporally,yao2022learning,lippe2022citris,lippe2023causal}, or both~\citep{lachapelle2021disentanglement,lachapelle2024nonparametric}. Other approaches rely on counterfactual view~\citep{brehmer2022weakly}, multi-view data~\citep{yao2024multiview,xu2024sparsity}, more supervision information~\citep{yang2021causalvae,shen2022weakly,liang2023causal}, causal ordering prior \citep{kori2023causal}, constraint on the latent support~\citep{ahuja2023interventional,wang2021desiderata}, or structural constraints \citep{silva2006learning,xie2020generalized,cai2019triad,xie2022identification,adams2021identification,huang2022latent,dong2023versatile,kivva2021learning}.
\section{A Generative Model with Distribution Shift}
We assume that the $d$-dimensional feature vector $X$ (e.g., image pixels) is generated from latent variables $Z = (Z_1, \dots, Z_n)$ via an unknown, smooth, and invertible mixing function $g : \mathbb{R}^n \to \mathbb{R}^d$. Also, the label $Y$ is a categorical value that takes values from $v_1, \dots, v_C$. In each domain, the latent variables $Z$ and the label $Y$ are governed by a structural equation model (SEM) that shares the same but unknown directed acyclic graph (DAG) $\mathcal{G}$. The data-generating process can be summarized as follows:
\begin{equation}\label{eq:data_generating_process}
\begin{alignedat}{3}
\hspace{-1.2em}\text{(Mixing)}  & \quad X&&=g(Z),\\
\hspace{-1.2em}\raisebox{-1.2ex}[0pt][0pt]{\text{(SEM)}} & \quad Z_i &&= f_i(\textrm{PA}(Z_i;\mathcal{G}), \epsilon_i; \theta_i^{(u)}), \,\,i\in[n],\\
& \quad Y &&= f_Y(\textrm{PA}(Y;\mathcal{G}), \epsilon_Y; \theta_Y^{(u)}).
\end{alignedat}
\end{equation}
Here, $\textrm{PA}(Z_i; \mathcal{G})$ and $\textrm{PA}(Y; \mathcal{G})$ represent the parents of $Z_i$ and $Y$, respectively, in the DAG $\mathcal{G}$. The $\epsilon_i$'s are mutually independent exogenous noise variables, and $\theta_i^{(u)}$ denotes the effective parameters (or latent factors) associated with each structural equation in the $u$-th domain. The generative process of each latent variable $Z_i$ may vary across domains, with the variation being determined by the corresponding parameters $\theta_i^{(u)}$. Such variability is common in practice, e.g., arising from heterogeneous datasets, where the causal mechanisms may shift. An example of the generative process is depicted in \cref{fig:example_setting}.

\begin{figure}[t]
\centering
	{\hspace{0cm}\begin{tikzpicture}[scale=.75, line width=0.5pt, inner sep=0.2mm, shorten >=.1pt, shorten <=.1pt]
		\draw (0, 0) node(2) [circle, draw]  {{\footnotesize\,${Z}_3$\,}};
		\draw (-2, 0) node(1)[circle, draw]  {{\footnotesize\,${Z}_2$\,}};
  \draw (-1, 1.1) node(8)[circle, draw]  {{\footnotesize\,$Y$\,}};
		\draw (2, 0) node(3)[circle, draw]  {{\footnotesize\,${Z}_4$\,}};
		\draw (-4, 0) node(5)[circle, draw]  {{\footnotesize\,$Z_1$\,}};
		\draw (-4.4, 2.2) node(9)  {{\footnotesize\,$\theta_1^{(u)}$\,}};
		\draw (-1.4, 2.2) node(10)  {{\footnotesize\,$\theta_Y^{(u)}$\,}};
		\draw (-2.4, 2.2) node(11)  {{\footnotesize\,$\theta_{2}^{(u)}$\,}};
		\draw (-0.4, 2.2) node(12)  {{\footnotesize\,$\theta_3^{(u)}$\,}};
		\draw (1.4, 2.2) node(13)  {{\footnotesize\,$\theta_4^{(u)}$\,}};
        \draw (-3.1, -2.7) node(15) [circle, draw]  {{\footnotesize\,${X}_1$\,}};
        \draw (-1.7, -2.7) node(16) [circle, draw]  {{\footnotesize\,${X}_1$\,}};
        \draw (-0.3, -2.7) node(17) [circle]  {{\footnotesize\,$\dots$\,}};
         \draw (1.1, -2.7) node(18) [circle, draw]  {{\footnotesize\,${X}_d$\,}};
		\draw[-arcsq] (5) -- (1); 
		\draw[-arcsq] (1) -- (2); 
		\draw[-arcsq] (8) -- (2); 
		\draw[-arcsq] (3) -- (2);
            \draw[-arcsq] (1) -- (8);
		\draw[-arcsq, dashed] (9) -- (5);
		\draw[-arcsq, dashed] (11) -- (1);
		\draw[-arcsq, dashed] (10) -- (8);
		\draw[-arcsq, dashed] (12) -- (2);
  	\draw[-arcsq, dashed] (13) -- (3);
\node[rectangle, dashed, inner sep=1.2mm, draw=black!100, opacity=1, fit = (3) (5), yshift = 0mm](aa) {};
\node[myarrow, anchor=tip, minimum height=32pt, rotate=-90] at ([yshift=-43pt]aa.south) (bb) {$g$};
\node[rectangle, inner sep=1.2mm, draw=black!100, opacity=1, fit = (15) (18), yshift = 0mm](aa) {};
		\end{tikzpicture}}
	\caption{An example of the generative process considered in our work. The feature vector $X$ is generated from latent variables $Z$, which, along with the label $Y$, follow a structural equation model. The causal mechanisms, governed by parameters $\theta_i^{(u)}$ and $\theta_Y^{(u)}$, may shift across domains. Here, $X$ and the domain index $u$ are observable. Furthermore, the label $Y$ is available in the source domains but remains unobserved in the target domain. For this example, we have $Z_{\mb}=\{Z_2,Z_3,Z_4\}$, $Z_{\pa}=\{Z_2\}$, $Z_{\ch}=\{Z_3\}$, $Z_{\sps}=\{Z_4\}$, and $Z_{\mb}^\complement=\{Z_1\}$.
    To illustrate these latent variables, consider an example from PACS benchmark~\citep{li2017deeper}: $Y$ represents whether it is a horse, while $Z_2$ captures key defining features (e.g., a horse's head or horseshoes), and $Z_3$ represents attributes influenced by the horse (e.g., a saddle). Meanwhile, $Z_1$ and $Z_4$ can represent background elements.
    }
	\label{fig:example_setting}
    \vspace{-0.4em}
\end{figure}
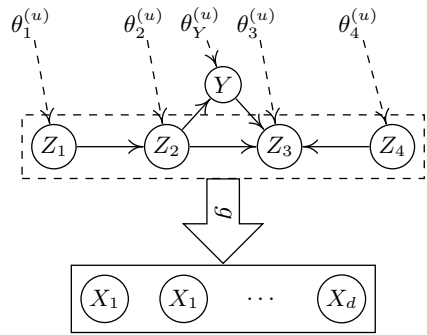

Let $P_{X,Y}(X,Y; \theta^{(u)})$ and $P_{Z,Y}(Z,Y; \theta^{(u)})$ represent the joint distributions of $X,Y$ and $Z,Y$, respectively, in the $u$-th domain. When the context is clear, we omit the subscript for simplicity, and write $P^{(u)}(X,Y)$ and $P^{(u)}(Z,Y)$, respectively. We also assume that $P_{Z,Y}$ and $\mathcal{G}$ satisfy the faithfulness assumption~\citep{spirtes2001causation}, and that $P_Z$ is third-order differentiable and positive everywhere on $\mathbb{R}^n$. 

Furthermore, we denote by $Z_{\mb}$, $Z_{\pa}$, $Z_{\ch}$, and $Z_{\sps}$ the Markov blanket\footnote{In this work, we use the term ``Markov blanket'' to refer to the parents, children, and spouses of a target variable.}, parents, children, and spouses of label $Y$, respectively. Also, let $Z_{\mb}^\complement$ denote the remaining latent variables outside the Markov blanket of $Y$, and $\mathcal{M}$ be the Markov network over latent variables $Z$ and label $Y$, whose edges are denoted by $\mathcal{E}(\mathcal{M})$. We define $\theta_{\mb}=(\theta_i)_{Z_i\in Z_{\mb}}$, and similarly for $\theta_{\pa}$, $\theta_{\ch}$, and $\theta_{\sps}$. We also denote by $\hat{Z}$, $\hat{\mathcal{G}}$, and $\hat{\mathcal{M}}$ the learned latent variables, learned DAG, and learned Markov networks, respectively.
\vspace{-0.3em}
\section{Identifiability Theory}\label{sec:identifiability_theory}
We present the identifiability theory for domain adaptation in a universal setting, where changes are allowed to occur anywhere in the latent space without restrictions. It is worth noting that specific types of domain shifts can be captured by imposing constraints on where changes occur. For instance:
\vspace{-1.8em}
\begin{itemize}
\item Restricting changes to the parents of  $Y$ can be viewed  as the covariate shift setting \citep{shimodaira2000improving}.
\item Restricting changes to $Y$ itself can be viewed as the target shift~\citep{zhang2013domain} or prior probability shift~\citep{storkey2009training} problem.
\item Restricting changes to the children of $Y$ can be viewed 
as the conditional shift~\citep{zhang2013domain} problem.
\end{itemize}
\vspace{-0.5em}
In contrast, our work considers the most general scenario, where changes may occur anywhere in the latent space, without imposing any specific restrictions.

In \cref{sec:identifiability_markov_blanket}, we discuss how learning latent representations of the label's Markov blanket enables adaptation to certain types of domain shifts, while highlighting why this approach is often insufficient for domain adaptation. We then propose an alternative approach in \cref{sec:identifiability_parents_children_spouses} that involves learning latent representations of the label's parents, children, and spouses. Finally, in \cref{sec:identifiability_joint_distribution}, we provide the identifiability guarantee for this approach.

\subsection{Subspace Identifiability of Latent Representations for Label's Markov Blanket}\label{sec:identifiability_markov_blanket}
With the advent of deep learning and its widespread adoption, many approaches leverage deep learning to learn compact latent representations of observations~\citep{ganin2015unsupervised}. These representations facilitate knowledge transfer in the latent space, enabling more efficient and effective transfer. The critical goal is then to learn latent representations that retain predictive information about $Y$. A traditional view is that the Markov blanket contains all information sufficient for predicting the target variable. Building on this perspective, a natural approach is to learn representations that correspond to the Markov blanket of the label $Y$.

More specifically, the aim is to learn a representation $\hat{Z}_{\mb}$ that is an invertible transformation of the label's Markov blanket $Z_{\mb}$, ensuring that $\hat{Z}_{\mb}$ contains all and only the information in $Z_{\mb}$. If such a representation can be recovered, we say that $Z_{\mb}$ is \emph{subspace identifiable}. With such a representation, the label $Y$ becomes conditionally independent of all other variables $Z_{\mb}^\complement$, given $\hat{Z}_{\mb}$. This implies that $\hat{Z}_{\mb}$ captures all essential information required to predict $Y$. Notably, this approach aligns with the feature selection literature \citep{yu2020causality}, where the Markov blanket is recognized as the minimal predictive set for the target variable.\looseness=-1

However, recovering such a Markov blanket representation $\hat{Z}_{\mb}$ is challenging without additional assumptions, as latent variable modeling often admits many spurious solutions~\citep{hyvarinen1999nonlinear,locatello2019challenging}. Fortunately, access to multi-domain data makes this recovery feasible. To achieve this, we rely on specific assumptions that require the distribution of latent variables to vary sufficiently across the source domains, formally described below.\looseness=-1

\begin{assumption}[Sufficient changes for $Z$]\label{assumption:sufficient_changes_Z}
For each value of $Z$, there exist $2n+|\mathcal{M}|+1$ values of $u$, i.e., $u_k$ with $k=0,\dots,2n+|\mathcal{M}|$, such that the vectors $w(Z,u_k)-w(Z,u_0)$ with $k=1,\dots,2n+|\mathcal{M}|$ are linearly independent, where vector $w(Z, u)$ is defined as
\begin{flalign*}
&w(Z, u)=\left(\frac{\partial \log P^{(u)}(Z,Y)}{\partial  Z_i}\right)_{i\in[n]}\\
& \quad\qquad\qquad \oplus\left(\frac{\partial^2 \log P^{(u)}(Z,Y)}{\partial  Z_i^2}\right)_{i\in[n]}\\
& \quad\qquad\qquad \oplus \left(\frac{\partial^2 \log P^{(u)}(Z,Y)}{\partial  Z_i \partial  Z_j}\right)_{\{Z_i,Z_j\}\in\mathcal{E}(\mathcal{M}),\,i<j}.
\end{flalign*}
\end{assumption}

\begin{assumption}[Sufficient changes for $Y$]\label{assumption:sufficient_changes_Y}
For each value of $Z$, there exist $|Z_{\mb}|+1$ values of $(u,c)$ such that the vectors $\tau(Z,u_k,c_r)-\tau(Z,u_k,c_1)$ with  $c_r\neq c_1$ are linearly independent, where vector $\tau(Z,u,c)$ is defined as
\[
\tau(Z, u,c)= \left(\frac{\partial \log P^{(u)}(Z,Y=v_c)}{\partial  Z_i }\right)_{Z_i\in Z_{\mb}}.
\]
\end{assumption}
It is worth noting that different forms of sufficient change conditions have been adopted in nonlinear ICA~\citep{hyvarinen2023nonlinear} and causal representation learning~\citep{scholkopf2021causal}. 
These distribution changes, along with the invariant mixing function, offer valuable information for inferring the latent variables and their relations.
We now provide identifiability theory to learn the latent representations for the label's Markov blanket. The proof is provided in \cref{app:proof_identifiability_markov_blanket} and is inspired by \citet{zhang2024causal}. Although we state the faithfulness assumption~\citep{spirtes2001causation} in the theorem above and \cref{theorem:identifiability_parents_children_spouses}, it suffices to adopt the single adjacency-faithfulness (SAF) and single unshielded-collider-faithfulness (SUCF) assumptions~\citep{ng2021reliable,zhang2024causal}. These assumptions are considerably weaker than the faithfulness assumption and ensure that the Markov network $\mathcal{M}$ is the same as the moralized graph of the DAG $\mathcal{G}$~\citep[Proposition~2]{zhang2024causal}.
\begin{restatable}[Subspace identifiability of Markov blanket]{theorem}{ThmIdentifiabilityMarkovBlanket}\label{theorem:identifiability_markov_blanket}
Consider the generative process in \cref{eq:data_generating_process}. Suppose that \cref{assumption:sufficient_changes_Z,assumption:sufficient_changes_Y}, as well as the faithfulness assumption, hold. By modeling the same generative process with minimal number of edges for the learned Markov network $\hat{\mathcal{M}}$, the learned Markov blanket $\hat{Z}_{\mb}$ is an invertible transformation of the true Markov blanket $Z_{\mb}$.
\end{restatable}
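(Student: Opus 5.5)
Our plan follows the sparsity-plus-sufficient-change strategy of \citet{zhang2024causal}, adapted to the joint density of $(Z,Y)$.

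\textbf{Step 1: change of variables.} Let $(\hat g,\hat P^{(u)})$ be any alternative model that generates the same observed distributions $P^{(u)}(X,Y)$ in all source domains. Define $h=\hat g^{-1}\circ g$, so that $\hat Z=h(Z)$ with $h$ a diffeomorphism. For every domain $u$ and class $c$, the change-of-variables formula gives
\[
\log P^{(u)}(Z,Y=v_c)=\log \hat P^{(u)}(\hat Z,Y=v_c)+\log\lvert\det J_h(Z)\rvert .
\]
The Jacobian term does not depend on $u$ or $c$.

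\textbf{Step 2: recover the Markov network.} Differentiate this identity twice with respect to $\hat Z_k,\hat Z_l$ and subtract the $u_0$ version, which removes the Jacobian term. Each mixed second derivative $\partial^2\log\hat P^{(u)}/\partial\hat Z_k\partial\hat Z_l$ with $\{\hat Z_k,\hat Z_l\}\notin\mathcal{E}(\hat{\mathcal M})$ is zero. By the chain rule, it equals a linear combination of the entries of $w(Z,u)-w(Z,u_0)$, with coefficients built from products of entries of $J_{h^{-1}}$ and their derivatives; none of these coefficients depends on $u$. \cref{assumption:sufficient_changes_Z} makes these difference vectors linearly independent, so every coefficient must vanish. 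This yields sparsity constraints on $J_{h^{-1}}$ of the form: $\partial Z_i/\partial\hat Z_k\cdot\partial Z_j/\partial\hat Z_l=0$ whenever $\{\hat Z_k,\hat Z_l\}$ is a non-edge and $i=j$ or $\{Z_i,Z_j\}\in\mathcal E(\mathcal M)$.

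Next, use invertibility: some permutation $\pi$ makes $\partial Z_{\pi(k)}/\partial\hat Z_k\neq0$ for all $k$. The constraints above then show that every edge of $\mathcal M$ among $Z$ is mapped by $\pi$ to an edge of $\hat{\mathcal M}$. Minimality of $|\hat{\mathcal M}|$ forces equality, so $\hat{\mathcal M}\cong\mathcal M$ up to $\pi$. The same argument also forces $\partial Z_i/\partial\hat Z_k=0$ whenever $Z_i$ is not adjacent to, or equal to, $Z_{\pi(k)}$ in the suitable sense. Faithfulness, through SAF and SUCF, ensures that $\mathcal M$ equals the moral graph of $\mathcal G$. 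Hence the neighbors of $Y$ in $\mathcal M$ are exactly $Z_{\mb}$, and $\hat Z_{\mb}$ is the set $\pi(Z_{\mb})$ of neighbors of $Y$ in $\hat{\mathcal M}$.

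\textbf{Step 3: block structure via the label.} Differentiate the Step 1 identity once with respect to $\hat Z_k$, and take the difference between classes $c_r$ and $c_1$ within the same domain; this again removes the Jacobian term. Since $Y$ is independent of $\hat Z_k$ given the rest when $\hat Z_k\notin\hat Z_{\mb}$, the left-hand side is zero for such $k$. The right-hand side equals $\sum_{Z_i\in Z_{\mb}}\big(\partial Z_i/\partial\hat Z_k\big)\big[\tau(Z,u,c_r)-\tau(Z,u,c_1)\big]_i$, because only components in $Z_{\mb}$ vary with $Y$. \cref{assumption:sufficient_changes_Y} supplies $|Z_{\mb}|$ linearly independent such difference vectors. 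Therefore $\partial Z_i/\partial\hat Z_k=0$ for all $Z_i\in Z_{\mb}$ and $\hat Z_k\notin\hat Z_{\mb}$.

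So $Z_{\mb}$ depends only on $\hat Z_{\mb}$, that is, $Z_{\mb}=\phi(\hat Z_{\mb})$. The Jacobian $J_{h^{-1}}$ is then block triangular. Since it is invertible and the two sets have the same cardinality by Step 2, the diagonal block $\partial Z_{\mb}/\partial\hat Z_{\mb}$ is invertible. A global invertibility argument then shows that $\phi$ is a diffeomorphism, which gives the theorem.

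\textbf{Main obstacle.} The main difficulty is Step 2: carrying out the combinatorial argument that converts the vanishing coefficient products into a graph isomorphism. This includes handling the terms involving derivatives of $J_{h^{-1}}$, which the diagonal second-derivative entries in $w$ are meant to absorb. It also includes checking that minimality cannot be achieved by some alternative graph with a different neighborhood of $Y$. I would first try to import the proof of \citet{zhang2024causal} directly, treating $Y$ as an extra node that is fixed and not mixed.
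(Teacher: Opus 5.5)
Your proposal is correct. Its final assembly differs from the paper's.

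\textbf{What matches.} Your Steps 1--3 are the paper's auxiliary results: the vanishing-Jacobian-product proposition, parts (a)--(c), where your Step 3 is exactly part (c), and the Markov-network identifiability proposition.

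\textbf{How the paper finishes.} From there the paper works element-wise. It uses an intermediate-value-theorem argument from \citet{zhang2024causal} to get partial disentanglement. It then applies an inverse-Jacobian sparsity lemma to show that $\hat Z_{\pi(i)}$ depends only on $\{Z_i\}\cup\Psi_{Z_i}$. Finally it observes that an intimate neighbour of a node adjacent to $Y$ is itself adjacent to $Y$, so $\hat Z_{\mb}$ depends only on $Z_{\mb}$; the reverse direction follows from the companion lemma.

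\textbf{How you finish.} You use the class-difference identity at the block level instead. The zero block $\partial Z_{\mb}/\partial\hat Z_{\mb}^\complement=0$ holds at every point, with no reference to a permutation. Once $|Z_{\mb}|=|\hat Z_{\mb}|$, the inverse of the block-triangular Jacobian has the same zero block, so $\partial\hat Z_{\mb}/\partial Z_{\mb}^\complement=0$. With support all of $\mathbb{R}^n$ (connected slices), this gives $\hat Z_{\mb}=f(Z_{\mb})$ and $Z_{\mb}=\phi(\hat Z_{\mb})$ with $f$ and $\phi$ mutually inverse. That is a concrete version of your ``global invertibility argument''.

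\textbf{What each route buys.} Yours is shorter and avoids both the IVT step and the intimate-neighbour machinery. It also needs the pointwise nonzero-diagonal permutation only once, for the edge count, rather than a permutation consistent across points. The paper's element-wise route is the one that carries over to the parent/child/spouse theorem. There the class-difference identity cannot help, because all three groups are adjacent to $Y$, so the paper reuses that machinery for Theorem 1.

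\textbf{An ordering fix in Step 2.} You invoke minimality after embedding only the $Z$--$Z$ edges, and that does not force equality: the learned network could trade a $Y$-edge for an extra $Z$--$Z$ edge. Put Step 3 first. Since $\partial Z_{\pi(k)}/\partial\hat Z_k\neq 0$, the zero block gives $Z_{\pi(k)}\in Z_{\mb}\Rightarrow\hat Z_k\in\hat Z_{\mb}$. So $Y$-edges also embed, and minimality then yields the isomorphism. Alternatively, argue by counting:
\begin{itemize}
\item the zero block plus invertibility of the Jacobian gives $|Z_{\mb}|\le|\hat Z_{\mb}|$;
\item the $Z$--$Z$ embedding plus minimality gives the reverse inequality.
\end{itemize}
Either way this resolves what you listed as your main obstacle, namely a different neighbourhood of $Y$ achieving minimality.
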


However, learning latent representations that correspond to the subspace of the label's Markov blanket is insufficient for domain adaptation in many scenarios. For instance, consider the factorization of the joint distribution $P(Z_{\mb},Y) = P(Y \mid Z_{\mb}) P(Z_{\mb})$. If $P(Z_{\mb})$ changes across domains while $P(Y \mid Z_{\mb})$ remains invariant, domain adaptation can be achieved by using the same classifier (with $Z_{\mb}$ or $\hat{Z}_{\mb}$ as input) trained on the source domains in the target domain. This corresponds to a scenario where the conditional distributions $P(Z_{\pa}\mid\textrm{PA}(Z_{\pa};\mathcal{G}))$ or $P(Z_{\sps}\mid\textrm{PA}(Z_{\sps};\mathcal{G}))$ change across domains, while $P(Y\mid Z_{\ch})$ and $P(Z_{\ch}\mid\textrm{PA}(Z_{\ch};\mathcal{G}))$ remain invariant, which is clearly restrictive.  

Now consider an alternative scenario where the factorization is given by $P(Z_{\mb},Y) = P(Z_{\mb}\mid Y) P(Y)$, where $P(Z_{\mb}\mid Y)$ remains invariant across domains while $P(Y)$ changes. This is known as the target shift~\citep{zhang2013domain} or prior probability shift \citep{storkey2009training} problem. However, with subspace identifiability of the label's Markov blanket indicated by \cref{theorem:identifiability_markov_blanket}, we do not know which parts of the learned representations correspond to the label's children, spouses, or parents. In this case, one may also factorize the distribution as $P(Z_{\mb},Y) = P(Y \mid Z_{\mb}) P(Z_{\mb})$, where both conditional distributions are allowed to change. Since $Y$ is not available in the target domain and $P(Y \mid Z_{\mb})$ changes, one no longer has identifiability of distribution $P(Z_{\mb},Y)$ in the target domain.

This motivates us to separate the representations of $Z_{\mb}$ into three different subspaces in the next subsection, allowing us to improve the identifiability and to have a more parsimonious representation of the changes.

\subsection{Subspace Identifiability of Latent Representations for Label's Parents, Children, and Spouses}\label{sec:identifiability_parents_children_spouses}

In the previous subsection, we demonstrated that learning latent representations corresponding to the subspace of the label's Markov blanket is often insufficient for domain adaptation. This limitation arises, in part, because such representations are overly coarse-grained. To address this issue, we propose a more fine-grained approach that involves learning latent representations corresponding to three distinct subspaces of the label's Markov blanket: its parents, children, and spouses. Conceptually, this can be viewed as partitioning the Markov blanket into these three subspaces and focusing on recovering each subspace separately. In \cref{sec:identifiability_joint_distribution}, we will show how such representations enable domain adaptation with identifiability guarantee in a universal setting.

Before presenting the assumptions and identifiability theory, we first introduce the notion of an \emph{intimate neighbor}. Specifically, a latent variable $Z_i$ is said to be an intimate neighbor of $Z_j$ if $Z_i$ is adjacent to $Z_j$ and to all other neighbors of $Z_j$ in $\mathcal{M}$. Based on this, we introduce the following structural assumption on the latent DAG $\mathcal{G}$:  
\begin{assumption}[Group-specific intimate neighbors]\label{assumption:graphical_criterion}
The intimate neighbors of label $Y$'s parents, children, and spouses can only have intimate neighbors---excluding $Y$ itself---within their respective groups, i.e., other parents, children, or spouses of $Y$.
\end{assumption}
This assumption is rather mild as it permits edges among the parents, children, and spouses of $Y$, but restricts edges between intimate neighbors belonging to different groups. In practice, intimate neighbors may be relatively rare. This assumption is necessary because, without additional conditions, it is generally not possible to disentangle $Z_i$ from $Z_j$ if $Z_i$ is an intimate neighbor of $Z_j$, as supported by the theory in \citet{zhang2024causal}.

Next, we present the identifiability theory for learning representations of the subspaces corresponding to parents, children, and spouses. The proof is given in \cref{app:proof_identifiability_parents_children_spouses}.

\begin{restatable}[Subspace identifiability of parents, children, and spouses]{theorem}{ThmIdentifiabilityParentsChildrenSpouses}\label{theorem:identifiability_parents_children_spouses}
Consider the generative process in \cref{eq:data_generating_process}. Suppose that \cref{assumption:sufficient_changes_Z,assumption:sufficient_changes_Y,assumption:graphical_criterion}, as well as the faithfulness assumption, hold. By modeling the same generative process with minimal number of edges for the learned Markov network $\hat{\mathcal{M}}$, there exists a partition of the learned Markov blanket $\hat{Z}_{\mb}$, denoted as $\hat{Z}_{S_1}$, $\hat{Z}_{S_2}$, and $\hat{Z}_{S_3}$, such that they are invertible transformations of the true parents $Z_{\pa}$, children $Z_{\ch}$, and spouses $Z_{\sps}$, respectively.
\end{restatable}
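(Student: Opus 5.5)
The plan is to reuse the machinery behind \cref{theorem:identifiability_markov_blanket}, which follows \citet{zhang2024causal}, and then refine the block structure of the Jacobian of the indeterminacy map.

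\textbf{Step 1: the Jacobian sparsity pattern.} Let $h=\hat{g}^{-1}\circ g$, so that $\hat{Z}=h(Z)$. Both models generate the same $P^{(u)}(X,Y)$ in every domain. By the change-of-variables formula,
\[
\log P^{(u)}(Z,Y)=\log \hat{P}^{(u)}(\hat{Z},Y)+\log|\det J_h(Z)|,
\]
and the Jacobian term does not depend on $u$. We differentiate this identity once and twice in $Z$ and take differences across the $2n+|\mathcal{M}|+1$ domains of \cref{assumption:sufficient_changes_Z}. The linear-independence condition then forces every entry of $\partial^2\log\hat P/\partial\hat Z_i\partial\hat Z_j$ for a non-edge of $\hat{\mathcal M}$ to vanish, and it transfers the support of the Markov network. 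Combined with minimality of $|\hat{\mathcal{M}}|$, this gives $\hat{\mathcal{M}}\cong\mathcal{M}$ up to a permutation. It also gives the characterization of \citet{zhang2024causal}: $\partial \hat Z_i/\partial Z_j\neq 0$ is possible only if $Z_j$ is $Z_{\pi(i)}$ itself or an intimate neighbor of $Z_{\pi(i)}$ in $\mathcal{M}$.

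\textbf{Step 2: locating $Y$ and the Markov blanket.} Next we use \cref{assumption:sufficient_changes_Y} on the label node. Taking differences across classes $c_r$ versus $c_1$ removes the Jacobian term. The result is that the gradients of $\log P(Z,Y=v_c)$ with respect to $Z_{\mb}$ are expressed through $\hat Z$ only via coordinates adjacent to $Y$ in $\hat{\mathcal M}$. Linear independence then shows that $\hat Z_{\mb}$ depends on $Z$ only through $Z_{\mb}$, which is \cref{theorem:identifiability_markov_blanket}. We take this result as given.

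\textbf{Step 3: recovering the three groups.} This is where $\hat{\mathcal G}$ enters. By faithfulness (SAF/SUCF), $\mathcal M$ is the moral graph of $\mathcal G$. Within the Markov blanket, the three groups are distinguished by graph structure rather than by the undirected graph alone:
\begin{itemize}
\item spouses are adjacent to $Y$ in $\mathcal M$ only through moralization, that is, through a shared child;
\item children are connected to $Y$ by true edges and form v-structures $Y\to Z_{\ch}\leftarrow Z_{\sps}$;
\item parents are the remaining true neighbors.
\end{itemize}
Since the learned model reproduces the same generative process with a minimal edge set, $\hat{\mathcal G}$ must share the skeleton and the unshielded colliders involving $Y$ with $\mathcal G$. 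This lets us define $S_1,S_2,S_3$ as the learned parents, children, and spouses of $Y$.

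We then apply the Step 1 characterization coordinatewise. Each $\hat Z_i$ with $i\in S_2$ is a function of $Z_{\pi(i)}\in Z_{\ch}$ and of the intimate neighbors of $Z_{\pi(i)}$. By \cref{assumption:graphical_criterion}, those intimate neighbors, excluding $Y$, lie in $Z_{\ch}$. So $\hat Z_{S_2}$ is a function of $Z_{\ch}$ alone, and the same argument works for $S_1$ and $S_3$. Restricted to $Z_{\mb}$, the Jacobian of $h$ is then block-diagonal up to permutation, with blocks indexed by $(\pa,\ch,\sps)$. Invertibility of $h$ on $Z_{\mb}$, from \cref{theorem:identifiability_markov_blanket}, makes each diagonal block invertible, and so each $\hat Z_{S_k}$ is an invertible transformation of its group.

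\textbf{Main obstacle.} The hardest step is Step 3's claim that the learned parents, children, and spouses map to the true ones. Intimate-neighbor mixing could in principle move a variable across groups, or the learned orientation could disagree with the true one. To rule this out, I would first argue purely on the Markov network. Parents and children are both adjacent to $Y$ in $\mathcal M$, so they cannot be separated using $\mathcal M$ alone; spouses, by contrast, are exactly the blanket members that become non-adjacent to $Y$ once moralization edges are removed. I would then use \cref{assumption:graphical_criterion} to guarantee that the permutation $\pi$ preserves group membership. If that fails, the fallback is to use the v-structure pattern $Y\to Z_{\ch}\leftarrow Z_{\sps}$, which the minimal-edge equivalent model must reproduce, to separate children from parents.
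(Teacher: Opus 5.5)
\paragraph{Where the proposal breaks.} Steps 1--2, and the coordinatewise argument at the end of your Step 3 (intimate neighbors plus \cref{assumption:graphical_criterion}, then a block-diagonal Jacobian), are sound. The gap is in how you define the partition. You take $S_1,S_2,S_3$ to be the parents, children, and spouses of $Y$ in the learned DAG $\hat{\mathcal G}$. That needs $\hat{\mathcal G}$ to agree with $\mathcal G$ on the skeleton, the colliders, and the orientation of the edges at $Y$, and nothing gives you this. The minimal-edge constraint acts on $\hat{\mathcal M}$, not on $\hat{\mathcal G}$, and the only graphical conclusion available is that the two moral graphs coincide. That fixes neither skeleton nor orientation.

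In fact the claim is false. Take $Z_1\to Y\to Z_2$, which satisfies \cref{assumption:graphical_criterion}. Consider the learned model with $\hat Z=Z$ and DAG $Z_1\leftarrow Y\leftarrow Z_2$, whose mechanisms come from refactorizing each $P^{(u)}(Z,Y)$ as $P^{(u)}(Z_2)P^{(u)}(Y\mid Z_2)P^{(u)}(Z_1\mid Y)$. It reproduces every domain and has the same minimal Markov network. Yet its parent of $Y$ is $\hat Z_2$, a transformation of the true child, so your $\hat Z_{S_1}$ is not a transformation of $Z_{\pa}$. Your next sentence, that each $\hat Z_i$ with $i\in S_2$ depends on $Z_{\pi(i)}\in Z_{\ch}$, silently assumes exactly the matching that is in question.

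Neither fallback repairs this. Which edges of $\hat{\mathcal M}$ are moral edges again depends on $\hat{\mathcal G}$. And even if $\hat{\mathcal G}$ were known to reproduce the collider $Y\to Z_{\ch}\leftarrow Z_{\sps}$, that collider only singles out children with a spouse as co-parent. Parents and spouse-free children stay indistinguishable, as in the chain above.

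\paragraph{The fix.} The theorem is purely existential, so you never need to read the roles off $\hat{\mathcal G}$. Take the permutation $\pi$ with $\hat{\mathcal M}_\pi=\mathcal M$ from Step 1. It maps $Z_{\mb}$ onto $\hat Z_{\mb}$, since both are the neighbors of $Y$ in the corresponding Markov network, which is the moral graph under SAF/SUCF. Define $\hat Z_{S_1}\coloneqq\{\hat Z_{\pi(k)} \mid Z_k\in Z_{\pa}\}$, and $\hat Z_{S_2},\hat Z_{S_3}$ analogously with $Z_{\ch}$ and $Z_{\sps}$. This is exactly what the paper does.

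Your Step 3 argument then goes through verbatim. $\hat Z_{\pi(i)}$ with $Z_i\in Z_{\pa}$ depends only on $\{Z_i\}\cup\Psi_{Z_i}\subseteq Z_{\pa}$, and likewise for the other groups. The blocks are square automatically, and your block-diagonal Jacobian argument (or the paper's reverse-direction argument via \cref{lemma:identifiability_latent_variables_simple}) gives invertibility of each block. What you give up, as the paper does, is any claim that the learned model itself tells you which block is the parents and which is the children.
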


The above theorem implies that the latent representations of parents, children, and spouses can be disentangled, allowing for the recovery of their respective subspaces. In the next subsection, we explain how these representations facilitate domain adaptation in a universal setting with identifiability guarantee.

\subsection{Identifiability of Joint Distribution in Target Domain}\label{sec:identifiability_joint_distribution}

Building on the identifiability of latent representations established in \cref{sec:identifiability_parents_children_spouses}, we now demonstrate how this facilitates domain adaptation with identifiability guarantees in a general setting. Specifically, the objective is to identify the joint distribution $P^\tau(X, Y)$ in the unlabeled target domain, or equivalently, $P^\tau(Y \mid X)$, since $P^\tau(X)$ is already known in the target domain.

To relate the conditional distribution $P^\tau(Y \mid X)$ at the level of raw observations $X$ to the latent representations, we first state the following proposition and provide the proof in \cref{app:proof_conditional_distribution_from_X_to_Z}.
\begin{proposition}\label{proposition:conditional_distribution_from_X_to_Z}
Consider the generative process in \cref{eq:data_generating_process}. We have
\begin{flalign*}
&P^\tau(Y=v_k\mid X)\\
&=\frac{P^\tau(Z_{\ch}\mid Y=v_k,Z_{\sps})P^\tau(Y=v_k\mid Z_{\pa})}{\sum_{c=1}^C P^\tau(Z_{\ch}\mid Y=v_c,Z_{\sps})P^\tau(Y=v_c\mid Z_{\pa})}.
\end{flalign*}
\end{proposition}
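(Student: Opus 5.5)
Because $g$ is smooth and invertible, $X$ and $Z$ carry exactly the same information. So $P^\tau(Y\mid X)=P^\tau(Y\mid Z)$, with $Z=g^{-1}(X)$. The Jacobian of the change of variables appears in both $P^\tau(X,Y=v_k)$ and $P^\tau(X)$, and so cancels in the ratio. We therefore work entirely in the latent space and write
\[
P^\tau(Y=v_k\mid Z)=\frac{P^\tau(Z,Y=v_k)}{\sum_{c=1}^C P^\tau(Z,Y=v_c)} .
\]
The aim is to factorize $P^\tau(Z,Y=v_c)$ and cancel every factor that does not depend on $c$.

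The key step is to use the causal Markov factorization of the SEM over $\mathcal{G}$ in the target domain:
\[
P^\tau(Z,Y)=P^\tau(Y\mid Z_{\pa})\prod_{i}P^\tau(Z_i\mid \textrm{PA}(Z_i;\mathcal{G})).
\]
We then sort the latent variables by whether their mechanism involves $Y$. A factor $P^\tau(Z_i\mid\textrm{PA}(Z_i;\mathcal{G}))$ depends on $Y$ only when $Y\in\textrm{PA}(Z_i)$, that is, when $Z_i\in Z_{\ch}$. All other factors (for $Z_{\pa}$, $Z_{\sps}$ and $Z_{\mb}^\complement$) do not depend on the value of $Y$. They therefore factor out of both numerator and denominator and cancel. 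This leaves
\[
P^\tau(Y=v_k\mid Z)\propto P^\tau(Y=v_k\mid Z_{\pa})\prod_{Z_i\in Z_{\ch}}P^\tau(Z_i\mid \textrm{PA}(Z_i;\mathcal{G})).
\]

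It remains to identify the product over children with $P^\tau(Z_{\ch}\mid Y,Z_{\sps})$, up to factors that do not depend on $Y$. The parents of each child $Z_i$ lie in $\{Y\}\cup Z_{\pa}\cup Z_{\ch}\cup Z_{\sps}$, because every parent of a child of $Y$ is either $Y$, a spouse, a parent, or another child. This is the main obstacle, since children can also have parents outside the blanket in general. The cleanest route is the following. By the local Markov property, $Y$ together with $Z_{\mb}$ is separated from $Z_{\mb}^\complement$ in the relevant sense, so we can equivalently start from $P^\tau(Y=v_k\mid Z_{\mb})$. We then write
\[
P^\tau(Z_{\mb},Y)=P^\tau(Z_{\pa},Z_{\sps})\,P^\tau(Y\mid Z_{\pa},Z_{\sps})\,P^\tau(Z_{\ch}\mid Y,Z_{\pa},Z_{\sps}).
\]
Two d-separation facts in $\mathcal{G}$ simplify this:
\begin{itemize}
\item $Y\independent Z_{\sps}\mid Z_{\pa}$. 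Every path from $Y$ to a spouse either goes through a parent (blocked) or through a child of $Y$, which is an unconditioned collider. Acyclicity rules out descendants of $Y$ entering $Z_{\pa}$.
\item $Z_{\ch}\independent Z_{\pa}\mid Y,Z_{\sps}$, provided the children's remaining parents lie in $Y\cup Z_{\sps}\cup Z_{\ch}$. I would argue this via the children's mechanisms. If some child has a parent in $Z_{\pa}$, that parent is simultaneously a parent of $Y$ and a spouse. It therefore already sits in the conditioning set, under the convention that such overlapping variables are counted among the spouses.
\end{itemize}

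With these two facts, the factor $P^\tau(Z_{\pa},Z_{\sps})$ is free of $Y$ and cancels in the ratio. The remaining terms give exactly the stated expression, with the denominator obtained by summing the numerator over $c=1,\dots,C$.

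I expect the main difficulty to be this bookkeeping of overlapping roles (a variable being both a parent and a spouse) and making sure the d-separation arguments go through. I would handle it by fixing the partition convention explicitly and checking each path type.
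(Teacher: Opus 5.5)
Your final route is the paper's proof. You reduce to $P^\tau(Y\mid Z_{\mb})$, expand $P^\tau(Z_{\mb},Y)=P^\tau(Z_{\pa},Z_{\sps})P^\tau(Y\mid Z_{\pa},Z_{\sps})P^\tau(Z_{\ch}\mid Y,Z_{\pa},Z_{\sps})$, and drop conditioning variables using $Y\independent Z_{\sps}\mid Z_{\pa}$ and $Z_{\ch}\independent Z_{\pa}\mid Y,Z_{\sps}$. The paper only asserts these two independences. You try to justify them, and that is where the argument fails, because neither holds for a general DAG $\mathcal{G}$.

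\textbf{The first independence.} Your path argument is wrong. A path leaving $Y$ through a child $C$ need not have $C$ as a collider; it can continue $Y\to C\to W\to\cdots$. Acyclicity only ensures that the first collider after $Y\to C$, if there is one, is a descendant of $Y$ and so has no descendant in $Z_{\pa}$. A path that stays directed until it reaches a spouse is open. Take $Y\to C_1\to W\to S\to C_2\leftarrow Y$, so $Z_{\pa}=\emptyset$, $Z_{\ch}=\{C_1,C_2\}$, $Z_{\sps}=\{S\}$ and $W\in Z_{\mb}^\complement$. The true posterior is proportional in $Y$ to $P(Y)P(C_1\mid Y)P(C_2\mid Y,S)$. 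The claimed numerator is $P(C_1,C_2\mid Y,S)P(Y)=P(Y)P(C_1\mid Y)P(C_2\mid Y,S)\,P(S\mid C_1)/P(S\mid Y)$. Its extra factor $1/P(S\mid Y)$ is nonconstant in $Y$ under faithfulness. So the identity itself, not just the proof, needs an additional hypothesis.

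\textbf{The second independence.} In Figure~\ref{fig:example_setting}, $Z_2$ is a parent of both $Y$ and the child $Z_3$, and the caption puts it in $Z_{\pa}$ only, with $Z_{\sps}=\{Z_4\}$. The edge $Z_2\to Z_3$ then contradicts $Z_3\independent Z_2\mid Y,Z_4$, so the correct factor is $P(Z_3\mid Y,Z_2,Z_4)$, not $P(Z_3\mid Y,Z_4)$. Your convention repairs this only if such a variable sits in \emph{both} $Z_{\pa}$ and $Z_{\sps}$. If you move it out of $Z_{\pa}$, the edge $Z_2\to Y$ breaks the first independence. The sets must therefore overlap, which is not the partition used in Theorem~\ref{theorem:identifiability_parents_children_spouses}. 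Spouses that are descendants of $Y$ also break this independence, through colliders. Adding $P\to Y$ and $P\to W$ to the example above opens $C_1\to W\leftarrow P$ given $S$.

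\textbf{What would make it work.} You need an explicit extra assumption that no element of $Z_{\sps}$ is a descendant of $Y$, together with the overlapping convention. Under these, any path leaving $Y$ or a child along an out-edge must meet a collider before it can reach $Z_{\pa}\cup Z_{\sps}$. That collider is a descendant of $Y$, so it has no descendant in the conditioning set. Paths entering through an in-edge are blocked at a conditioned parent, spouse, or $Y$.

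Without that assumption, the valid general statement keeps $P^\tau(Z_{\ch}\mid Y,Z_{\pa},Z_{\sps})P^\tau(Y\mid Z_{\pa},Z_{\sps})$. Equivalently after normalization, it is $P^\tau(Y\mid Z_{\pa})$ times the product of the children's mechanisms from your first route, which was correct all along. The obstacle you raised there does not exist: every parent of a child of $Y$ is a spouse by definition, so it never lies outside the blanket.
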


The above proposition implies that, to identify $P^\tau(Y \mid X)$, it suffices to identify $P^\tau(Z_{\ch} \mid Y = v_c, Z_{\sps})$ and $P^\tau(Y \mid Z_{\pa})$ in the target domain. These conditional distributions are often simpler to model. However, the underlying latent variables $Z_{\pa}$, $Z_{\ch}$, and $Z_{\sps}$ are not directly observable and cannot be exactly recovered.

Fortunately, the identifiability theory developed in \cref{sec:identifiability_parents_children_spouses} shows that the subspaces corresponding to latent variables $Z_{\pa}$, $Z_{\ch}$, and $Z_{\sps}$ can be disentangled. Specifically, if one can learn representations $\hat{Z}_{\pa}$, $\hat{Z}_{\ch}$, and $\hat{Z}_{\sps}$ that are invertible transformations of $Z_{\pa}$, $Z_{\ch}$, and $Z_{\sps}$, respectively, then we have the following result.
\begin{restatable}{corollary}{CorollaryConditionalDistributionFromXtoZhat}\label{corollary:conditional_distribution_from_X_to_Zhat}
Consider the generative process in \cref{eq:data_generating_process}. Let $\hat{Z}_{\pa}$, $\hat{Z}_{\ch}$, and $\hat{Z}_{\sps}$ be invertible transformations of $Z_{\pa}$, $Z_{\ch}$, and $Z_{\sps}$, respectively. We have
\begin{flalign*}
&P^\tau(Y=v_k\mid X)\\
&=\frac{P^\tau(\hat{Z}_{\ch}\mid Y=v_k,\hat{Z}_{\sps})P^\tau(Y=v_k\mid \hat{Z}_{\pa})}{\sum_{c=1}^C P^\tau(\hat{Z}_{\ch}\mid Y=v_c,\hat{Z}_{\sps})P^\tau(Y=v_c\mid \hat{Z}_{\pa})}.
\end{flalign*}
\end{restatable}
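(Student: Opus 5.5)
The plan is to start from \cref{proposition:conditional_distribution_from_X_to_Z} and replace each latent block by its learned counterpart. Write $\hat{Z}_{\pa}=h_{\pa}(Z_{\pa})$, $\hat{Z}_{\ch}=h_{\ch}(Z_{\ch})$ and $\hat{Z}_{\sps}=h_{\sps}(Z_{\sps})$, where each $h$ is invertible and, as in the setting of \cref{theorem:identifiability_parents_children_spouses}, a diffeomorphism. The aim is to show that each factor in the ratio of \cref{proposition:conditional_distribution_from_X_to_Z} equals the corresponding factor with hats, up to a multiplicative term that does not depend on the label index. Such a term then cancels between numerator and denominator.

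\textbf{The parent factor.} The $\sigma$-algebras generated by $Z_{\pa}$ and by $\hat{Z}_{\pa}$ coincide, because $h_{\pa}$ is invertible. Conditioning on either one therefore gives the same conditional law of $Y$. Hence $P^\tau(Y=v_c\mid Z_{\pa})=P^\tau(Y=v_c\mid \hat{Z}_{\pa})$ for every $c$, where the right-hand side is evaluated at $\hat{Z}_{\pa}=h_{\pa}(Z_{\pa})$.

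\textbf{The child factor.} By the same reasoning, conditioning on $(Y=v_c, Z_{\sps})$ is equivalent to conditioning on $(Y=v_c,\hat{Z}_{\sps})$. We then push the conditional density of $Z_{\ch}$ forward through $h_{\ch}$ using the change-of-variables formula:
\[
P^\tau(\hat{Z}_{\ch}\mid Y=v_c,\hat{Z}_{\sps})=P^\tau(Z_{\ch}\mid Y=v_c,Z_{\sps})\,\bigl|\det J_{h_{\ch}^{-1}}(\hat{Z}_{\ch})\bigr|.
\]
The Jacobian determinant depends only on $\hat{Z}_{\ch}$. In particular it is the same for every $c$.

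\textbf{Cancellation.} Substitute these identities into the formula of \cref{proposition:conditional_distribution_from_X_to_Z}. Every term in the sum over $c$ in the denominator picks up the same factor $|\det J_{h_{\ch}^{-1}}|$ as the numerator. Since $P_Z$ is positive, the Jacobian is nonzero and finite, so it cancels and yields exactly the claimed expression.

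\textbf{Main obstacle.} The step needing the most care is the change of variables for the child density, since it is the only place where the ratios are not equal term by term. I would first make sure $h_{\ch}$ is a diffeomorphism so that the density transformation is valid; smoothness of $g$ and of the learned model should give this. If only invertibility were available, I would instead argue at the level of conditional distributions: the Radon–Nikodym derivative of the pushforward is the same for every $c$, so the same cancellation goes through. I would also note that the factorization in \cref{proposition:conditional_distribution_from_X_to_Z} is unaffected by the reparametrization, because $X$ determines $Z$ through $g^{-1}$, and $Z$ in turn determines $\hat{Z}_{\pa}$, $\hat{Z}_{\ch}$ and $\hat{Z}_{\sps}$.
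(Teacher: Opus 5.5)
Your proposal is correct and follows essentially the same route as the paper: start from \cref{proposition:conditional_distribution_from_X_to_Z} and move each conditional to the hatted variables by change of variables. The paper writes each conditional as a ratio of joint densities and substitutes the hatted joints, leaving the Jacobian cancellation implicit. You track it explicitly and correctly note that only the child factor picks up the $c$-independent term $\bigl|\det J_{h_{\ch}^{-1}}\bigr|$, which cancels between numerator and denominator; that term is nonzero because $h_{\ch}$ is a diffeomorphism, not because $P_Z$ is positive (positivity is what keeps the denominator nonzero).
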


The proof is available in \cref{app:proof_conditional_distribution_from_X_to_Zhat}. From the corollary above, it suffices to identify the subspaces of the latent variables $Z_{\pa}$, $Z_{\ch}$, and $Z_{\sps}$ up to invertible transformations, e.g., by leveraging the identifiability theory developed in \cref{sec:identifiability_parents_children_spouses}. Furthermore, the corollary implies that it suffices to establish the identifiability of the conditional distributions $P^\tau(\hat{Z}_{\ch} \mid Y, \hat{Z}_{\sps})$ and $ P^\tau(Y \mid \hat{Z}_{\pa})$ in the target domain.

To ensure identifiability, we adopt the minimal change principle, which posits that the distributional changes across domains are confined to a low-dimensional manifold \citep{stojanov2019data}. Specifically, we assume that the conditional distributions are governed by a small number of identifiable changing parameters, inspired by \citet{stojanov2019data}. This enables us to identify these parameters by learning low-dimensional representations of the conditional distributions that vary across the source domains.

\begin{assumption}[Low-dimensional changes]\label{assumption:low_dimensional_changes}
For each value of $v_c$, the conditional distribution $P(Z_{\ch} \mid Y=v_c, Z_{\sps})$ contains only a finite number of identifiable parameters that vary across domains. Furthermore, there is a sufficiently large number of source domains.
\end{assumption}

Similar to \citet{stojanov2019data}, \cref{assumption:low_dimensional_changes} implies the existence of a bijective transformation $h: \mathcal{P}_{\mathcal{Z}_{\ch} \mid \mathcal{Y}, \mathcal{Z}_{\sps}} \rightarrow \mathbb{R}^q$, where $q$ denotes the dimensionality of the effective changing parameters. Under this transformation, the conditional distribution in each domain $u$ can be expressed as a linear combination of the conditional distributions in the other source domains, i.e., $h\big(P_{Z_{\ch} \mid Y = v_c, Z_{\sps}}^{(u)}\big) = \sum_{i=1, i \neq u}^M \alpha_{ic}^{(u)} h\big(P_{Z_{\ch} \mid Y = v_c, Z_{\sps}}^{(i)}\big)$ for some mixture weights $\alpha_{1c}^{(u)}, \dots, \alpha_{Mc}^{(u)}$. Similarly, for the target domain $\tau$, there exist weights $\alpha_{1c}^\tau, \dots, \alpha_{Mc}^\tau$ such that $h\big(P_{Z_{\ch} \mid Y = v_c, Z_{\sps}}^\tau\big) = \sum_{i=1}^M \alpha_{ic}^\tau h\big(P_{Z_{\ch} \mid Y = v_c, Z_{\sps}}^{(i)}\big)$. More intuitively, \cref{assumption:low_dimensional_changes} indicates that all domain-specific conditional distributions (including source and target domains) for the label $v_c$ are confined to a $q$-dimensional manifold. Therefore, each conditional distribution for domain $u$ can be characterized by the mixture weights $\alpha_{1c}^{(u)}, \dots, \alpha_{Mc}^{(u)}$. We denote the conditional distribution associated with weights $\alpha_c$ as $P^{\alpha_c}(Z_{\ch} \mid Y = v_c, Z_{\sps})$.

We also adopt the following assumption, which ensures that the changes in conditional distributions are linearly independent. This is a rather mild assumption which requires that the conditional distribution varies sufficiently when their parameters change; otherwise, such parameter changes will not leave a sufficient footprint on the distribution shifts.
\begin{assumption}[Linear independence]\label{assumption:linearly_independent_distributions}
The elements in the set $\{\beta_c P^{\alpha_c}(Z_{\ch} \mid Y=v_c, Z_{\sps})+\beta'_c P^{\alpha'_c}(Z_{\ch} \mid Y=v_c, Z_{\sps});c=1,\dots,C\}$ are linearly independent for all $\alpha_c,\alpha'_c,\beta_c,\beta'_c, \beta_c+\beta'_c\neq 0$, if they are not zero.
\end{assumption}

With the assumptions above, we provide the identifiability result for the conditional distributions in the target domain.

\begin{restatable}[Identifiability of target distribution]{theorem}{ThmIdentifiabilityTargetDistribution}\label{theorem:identifiability_target_distrubution}
Suppose that \cref{assumption:low_dimensional_changes,assumption:linearly_independent_distributions} hold. Let $\hat{Z}_{\pa}$, $\hat{Z}_{\ch}$, and $\hat{Z}_{\sps}$ be invertible transformations of $Z_{\pa}$, $Z_{\ch}$, and $Z_{\sps}$, respectively. Suppose that we learn $P^\text{new}$ to match $P^\tau(\hat{Z}_{\ch}\mid \hat{Z}_{\pa}, \hat{Z}_{\sps})$ in the target domain, i.e., $P^\text{new}(\hat{Z}_{\ch}\mid \hat{Z}_{\pa},\hat{Z}_{\sps})=P^\tau(\hat{Z}_{\ch}\mid \hat{Z}_{\pa},\hat{Z}_{\sps})$ while constraining $P^\text{new}(\hat{Z}_{\ch} \mid Y, \hat{Z}_{\sps})$ to satisfy \cref{assumption:low_dimensional_changes}. Then, we have $P^\tau(\hat{Z}_{\ch}\mid Y,\hat{Z}_{\sps})=P^\text{new}(\hat{Z}_{\ch}\mid Y,\hat{Z}_{\sps})$ and $P^\tau(Y\mid \hat{Z}_{\pa})=P^\text{new}(Y\mid \hat{Z}_{\pa})$.
\end{restatable}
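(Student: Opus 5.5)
The plan is a mixture-identifiability argument in the style of \citet{zhang2013domain} and \citet{stojanov2019data}. By the Markov property of $\mathcal{G}$, $Y$ is independent of $\hat{Z}_{\sps}$ given $\hat{Z}_{\pa}$, and $\hat{Z}_{\ch}$ is independent of $\hat{Z}_{\pa}$ given $(Y,\hat{Z}_{\sps})$. Here I assume that parents of children lying outside the blanket have been absorbed into the conditioning, as implicitly done in \cref{proposition:conditional_distribution_from_X_to_Z}. These independences transfer to $\hat{Z}$ because the invertible transformations act separately on each group. Hence, in the target domain,
\begin{equation*}
P^\tau(\hat{Z}_{\ch}\mid \hat{Z}_{\pa},\hat{Z}_{\sps})=\sum_{c=1}^C P^\tau(\hat{Z}_{\ch}\mid Y=v_c,\hat{Z}_{\sps})\,P^\tau(Y=v_c\mid \hat{Z}_{\pa}),
\end{equation*}
and the same decomposition holds for $P^\text{new}$ with its own components $P^{\alpha'_c}(\hat{Z}_{\ch}\mid Y=v_c,\hat{Z}_{\sps})$ and weights $P^\text{new}(Y=v_c\mid\hat{Z}_{\pa})$. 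By \cref{assumption:low_dimensional_changes}, imposed on $P^\text{new}$ by construction, both families of components lie in the manifold $\{P^{\alpha_c}\}$ parametrized by mixture weights. The invertible maps relating $\hat{Z}$ to $Z$ preserve this property, since the Jacobian factors cancel in the conditionals.

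Next, fix $\hat{Z}_{\pa}$ and $\hat{Z}_{\sps}$. Equating the two mixtures and moving everything to one side gives
\begin{equation*}
\sum_{c=1}^C \Big[\beta_c P^{\alpha_c}(\hat{Z}_{\ch}\mid Y=v_c,\hat{Z}_{\sps}) + \beta'_c P^{\alpha'_c}(\hat{Z}_{\ch}\mid Y=v_c,\hat{Z}_{\sps})\Big]=0,
\end{equation*}
where $\beta_c=P^\tau(Y=v_c\mid\hat{Z}_{\pa})$ and $\beta'_c=-P^\text{new}(Y=v_c\mid\hat{Z}_{\pa})$. \Cref{assumption:linearly_independent_distributions} says that the nonzero summands are linearly independent, so every summand must vanish. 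For each $c$ this yields $\beta_c P^{\alpha_c}=-\beta'_c P^{\alpha'_c}$.

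Integrating this identity over $\hat{Z}_{\ch}$, using that each conditional density integrates to one, gives $\beta_c=-\beta'_c$, i.e.\ $P^\tau(Y=v_c\mid\hat{Z}_{\pa})=P^\text{new}(Y=v_c\mid\hat{Z}_{\pa})$. Wherever $\beta_c\neq 0$, dividing then gives $P^{\alpha_c}=P^{\alpha'_c}$, i.e.\ $P^\tau(\hat{Z}_{\ch}\mid Y=v_c,\hat{Z}_{\sps})=P^\text{new}(\hat{Z}_{\ch}\mid Y=v_c,\hat{Z}_{\sps})$. For classes with $\beta_c=0$ the component does not enter the target distribution at all. I would handle them either by appealing to positivity of $P_Z$ and the SEM, so that $P^\tau(Y=v_c\mid\hat{Z}_{\pa})>0$ for some $\hat{Z}_{\pa}$, or by noting that the equality is only needed where $Y=v_c$ has positive mass.

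The main obstacle is exactly this use of \cref{assumption:linearly_independent_distributions}. The case $\beta_c+\beta'_c=0$ is the one we want to derive, yet the assumption as written excludes it. So I would first apply the assumption only to the classes with $\beta_c+\beta'_c\neq0$. Linear independence forces those summands to vanish, and integrating a vanishing summand gives $\beta_c+\beta'_c=0$, a contradiction. Hence every class has $\beta_c+\beta'_c=0$, so each remaining summand is $\beta_c(P^{\alpha_c}-P^{\alpha'_c})$ and the total sum reads $\sum_c \beta_c(P^{\alpha_c}-P^{\alpha'_c})=0$. To conclude that each difference vanishes, I would interpret the assumption as also covering such signed differences of same-class members, which is the reading used by \citet{stojanov2019data}. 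This closes the argument.
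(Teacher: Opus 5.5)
Your argument is the paper's proof. You write $P^\tau$ and $P^{\text{new}}$ of $\hat Z_{\ch}$ given $(\hat Z_{\pa},\hat Z_{\sps})$ as $C$-component mixtures, using the same conditional independences the paper uses in \cref{proposition:conditional_distribution_from_X_to_Z}. You then invoke \cref{assumption:linearly_independent_distributions} so that every class-wise summand vanishes, integrate over $\hat Z_{\ch}$ to equate the label posteriors, and divide to equate the components. The only difference is cosmetic: you stay in $\hat Z$ coordinates, while the paper pulls back to $Z$ by change of variables and pushes forward at the end.

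You are right that the side condition $\beta_c+\beta'_c\neq0$, read literally, excludes exactly the case the conclusion lives in. The paper applies the assumption without comment, so it tacitly reads the condition as $\beta_c^2+\beta_c'^2\neq0$ (not both zero), which is presumably what was intended, following \citet{zhang2013domain}. That reading is really needed. Suppose $Y$ has no parents, so the weights are constants. Take class families $\{p_0+td\}$ and $\{q_0+td\}$, with $d$ small, compactly supported and of zero integral, and $p_0-q_0$ not compactly supported. These satisfy the literal condition, because their affine hulls are disjoint parallel lines. Yet $\tfrac12(p_0+d)+\tfrac12(q_0-d)=\tfrac12p_0+\tfrac12q_0$. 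You are also more careful than the paper about dividing by $P^\tau(Y=v_c\mid\hat Z_{\pa})$.

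However, your intermediate step is invalid: restricting to the classes with $\beta_c+\beta'_c\neq0$ does not yield a contradiction. The equation still contains the excluded summands $\beta_c(P^{\alpha_c}-P^{\alpha'_c})$, which are not known to vanish. So the retained summands need not sum to zero, and their linear independence yields nothing. The step is also unnecessary: once you adopt the broadened reading, apply it to the whole sum at once, as in your second paragraph and in the paper.
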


The proof is given in \cref{app:proof_identifiability_target_distrubution} and is inspired by \citet{stojanov2019data}. The core idea is that the learned low-dimensional representations allow us to reconstruct the conditional distribution in the target domain using unlabeled data in the target domain. Combined with the linear independence assumption, this further facilitates label prediction in the target domain. It is worth noting that the result can be straightforwardly extended to  multi-target domain adaptation by learning distinct $P^\text{new}$ for each target domain.

\begin{remark}
In summary, one can first utilize \cref{theorem:identifiability_parents_children_spouses} to learn a demixing function $\hat{g}^{-1}$ (i.e., an encoder) that extracts latent representations of the label's parents, children, and spouses, up to certain indeterminacies. This same demixing function can then be applied to the target domain, where \cref{theorem:identifiability_target_distrubution} guarantees the identifiability of the distributions $P^\tau(\hat{Z}_{\ch} \mid Y, \hat{Z}_{\sps})$ and $P^\tau(Y \mid \hat{Z}_{\pa})$ in the target domain. Finally, applying \cref{corollary:conditional_distribution_from_X_to_Zhat} ensures the identifiability of $P^\tau(Y \mid X)$ in the target domain.
\end{remark}
\begin{figure}
    \centering
    \includegraphics[width=\linewidth]{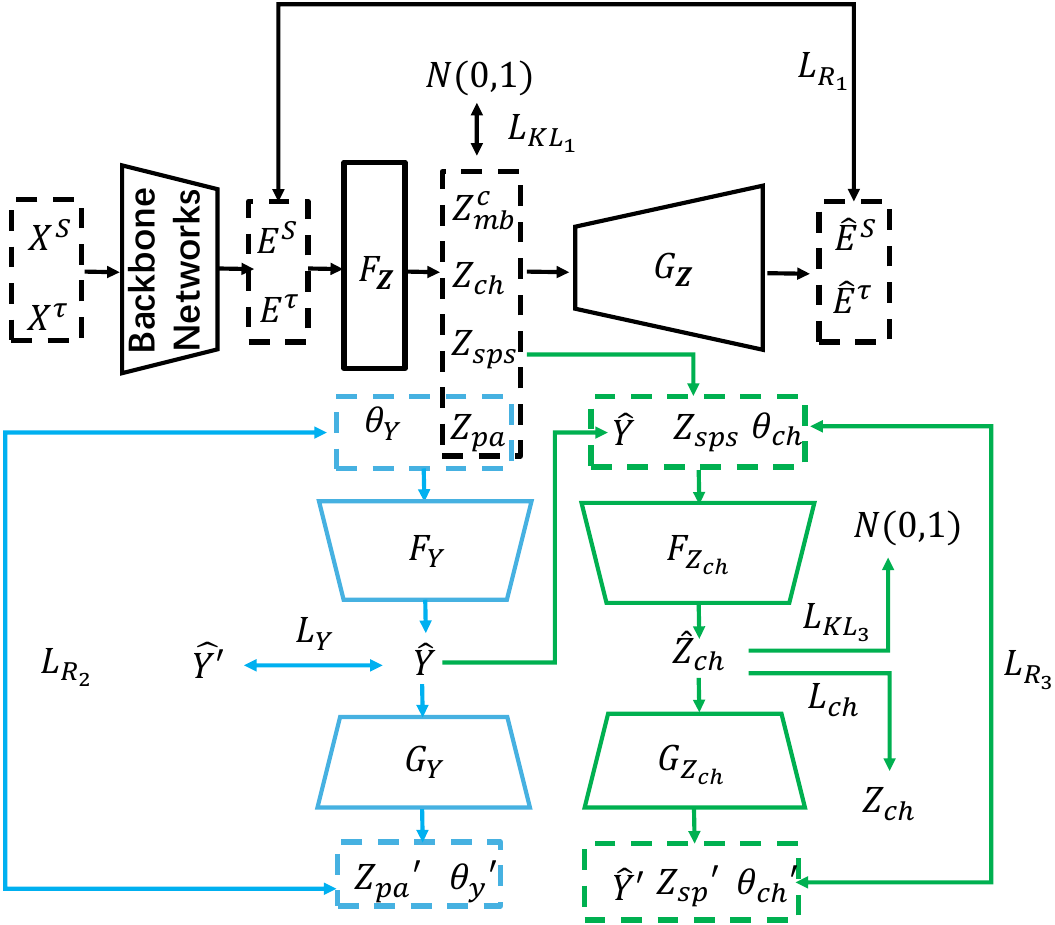}
    \caption{ \textbf{Overview of the General Approach for Multi-source Domain Adaptation (GAMA).} The model first maps input images \( X \) to a latent space \( Z \) using a VAE framework. The latent variables \( Z \) are partitioned into several components: \( Z_{\mb} \), \( Z_{\pa} \), \( Z_{\ch} \), and \( Z_{\sps} \). Two additional VAEs are employed to capture the relationships among the latent variables and label, which aids in estimating \( \theta \) for improved predictions. For the three VAEs in total, we have the following losses: \(\mathcal{L}_{\operatorname{vae},Z}=\mathcal{L}_{\operatorname{KL}_1} + \mathcal{L}_{\operatorname{R}_1}\),\(\mathcal{L}_{\operatorname{vae},Y} =\mathcal{L}_{\operatorname{R}_2} \) and \(\mathcal{L}_{\operatorname{vae},Z_{\ch}}=\mathcal{L}_{\operatorname{KL}_3} + \mathcal{L}_{\operatorname{R}_3} \). Cross-entropy loss \(\mathcal{L}_Y\) and mean squared error (MSE) loss \(\mathcal{L}_{\ch}\) are also used in the source domains to encourage better encoding. The final prediction is made by training a classifier on the inputs
\(\bigl(Z_{\pa},\,Z_{\sps},\,Z_{\ch}, \theta_{Y}^{(u)},\theta_{\ch}^{(u)}\bigr)
\).}
    \label{fig:model}
\end{figure}

\section{Domain Adaptation Approach}

Building on the theoretical insights established in the previous section, we propose a \textbf{G}eneral \textbf{A}pproach for \textbf{M}ulti-source domain \textbf{A}daptation (\textbf{GAMA}) that systematically learns and identifies both latent variable structures and label information in all domains. Our approach incorporates representation learning to characterize distributional shifts across domains, drawing inspiration partly from the framework presented by \citet{zhang2020domain}. The approach operates through a principled multi-stage process grounded in identifiability theory and the necessity of isolating different components in the Markov blanket for accurate prediction of the target variable $Y$.

First, we use variational autoencoders (VAEs) to match the distributions across source and target domains, extracting the required latent representations. Subsequently, we employ additional VAE~\citep{kingma2014autoencoding} modules to explicitly model inter-variable dependencies within the latent space, enabling systematic decomposition into block-level components while simultaneously estimating domain-specific parameters $\theta$. This design ensures that our framework effectively captures all variables constituting the Markov blanket of $Y$, thereby facilitating robust cross-domain generalization and achieving accurate predictions in the target domain. Note that we use VAEs because they provide a convenient way to model the distribution of latent variables, and make it easier to incorporate prior structural information (e.g., parent-child relationships) into our method.

We now describe the specific model architecture. We first take an input image ${X}$ and pass it through a backbone network (e.g., ResNet-50 \citep{he2016deep}) to obtain a feature representation ${E}$. An encoder \( F_Z \) then maps ${E}$ into a latent space ${Z}$. Since we adopt a VAE framework \citep{kingma2014autoencoding}, a decoder $G_Z$ is also introduced to reconstruct ${E}$ from ${Z}$. The reconstruction loss from the VAE enforces consistency between the original feature representation ${E}$ and its reconstructed version, preserving essential information. Here, we have the loss \(\mathcal{L}_{\operatorname{vae},Z}=\mathcal{L}_{\operatorname{KL}_1} + \mathcal{L}_{\operatorname{R}_1}\). Note that $\mathcal{L}_{\operatorname{R}}$ denotes reconstruction loss, while $\mathcal{L}_{\operatorname{KL}}$ denotes Kullback–Leibler (KL) divergence; the index indicates loss for different VAEs. For example, $\mathcal{L}_{\operatorname{KL}_1}$ denotes the KL divergence of the VAE from $X$ to $Z$. 

We partition the latent variable ${Z}$ as
\[
{Z} \;=\; \bigl(Z_{\mb}^\complement,\;{Z}_{\pa},\;{Z}_{\ch},\;{Z}_{\sps} \bigr) \;\in\; \mathbb{R}^n.
\]
According to \cref{fig:example_setting}, we observe that, given \(Z_{\mb}\), the elements relevant to \(Y\)
still include \(\theta_{Y}^{(u)}\) and \(\theta_{\ch}^{(u)}\). Once \(\theta_{Y}^{(u)}\) and \(\theta_{\ch}^{(u)}\) are accurately identified, combining them with \(Z_{\mb}\) yields
a stable prediction (since all relevant information is then obtained).

Consider the data generation process involving \(\theta_{Y}^{(u)}\) and \(\theta_{\ch}^{(u)}\):
\[
(\theta_{Y}^{(u)},\, Z_{\pa}) \;\mapsto\; Y
\quad\text{and}\quad
(\theta_{\ch}^{(u)},\, Y,\, Z_{\sps}) \;\mapsto\; Z_{\ch}.
\]
In each domain, these \(\theta\) values are fixed parameters. Thus,
we aim to learn \(\theta\) and \(Y\) so as to maximize
\(P(Z \mid \theta)\) and \(P(Z, Y \mid \theta)\) in the target domain. Formally, it is given by
\[
\begin{aligned}
&\max_{\theta_{Y}^{(u)},\theta_{\ch}^{(u)},q_1,q_2} 
\Bigl( 
\mathbb{E}_{Y\sim q_1(Y\mid Z_{\pa},\theta_{Y}^{(u)})} \log p_1(Z_{\pa},\theta_{Y}^{(u)}\mid Y) \\
&\qquad -\beta_1\,\mathrm{KL}\bigl(q_1(Y\mid Z_{\pa},\theta_{Y}^{(u)}) \parallel P(Y)\bigr) \\
&\qquad + 
\mathbb{E}_{Z_{\ch}\sim q_2(Z_{\ch}\mid Z_{\sps},Y,\theta_{\ch}^{(u)})} \log p_2(Z_{\sps},Y,\theta_{\ch}^{(u)}\mid Z_{\ch})\\
&\qquad -\beta_2\,\mathrm{KL}\bigl(q_2(Z_{\ch}\mid Z_{\sps},Y,\theta_{\ch}^{(u)}) \parallel p(Z_{\ch})\bigr)
\Bigr).
\end{aligned}
\]
Two VAEs are used here. Specifically, \( (\theta_{Y}^{(u)},\, Z_{\pa}) \;\mapsto\; Y\) involves an encoder \(F_Y\) and a decoder \(G_Y\), while \( (\theta_{\ch}^{(u)},\, Y,\, Z_{\sps}) \;\mapsto\; Z_{\ch}\) involves an encoder \(F_{Z_{\ch}}\) and a decoder \( G_{Z_{\ch}}\). We set $\beta_1$ and $\beta_2$ to $1$. These lead to the losses \(\mathcal{L}_{\operatorname{vae},Y}\) and \(\mathcal{L}_{\operatorname{vae},Z_{\ch}}\). 
Note that since ${Y}$ is discrete, we cannot assume that $P(Y)$ is a Gaussian distribution (which is commonly done in VAE estimation), and thus we use a Gumbel-Softmax VAE~\citep{jang2017gumbelsoftmax} which can convert the logit of $Y$ into continuous variables for further calculation~.
In the training stage, we treat $F_Y$ as the encoder producing $\hat{Y}$.  Since we have access to the ground truth labels $Y$ in the source domains, we simply calculate the cross-entropy between $Y$ and $\hat{Y}$, giving rise to the losses
\(\mathcal{L}_{Y}\) and \(\mathcal{L}_{\operatorname{vae},Y} =\mathcal{L}_{\operatorname{R}_2} \).

Furthermore, in the source domains, since we have access to the ground truth $Z_{\ch}$, we have the following MSE loss based on the encoded values $\hat{Z}_{\ch}$ to better capture the relationships between variables and the ground truth: \(\mathcal{L}_{\ch} = \operatorname{MSE}(Z_{\ch},\hat{Z}_{\ch})\), where $\operatorname{MSE}(\cdot)$ denotes the mean squared error. Also, for the other VAE, we have \(\mathcal{L}_{\operatorname{vae},Z_{\ch}}=\mathcal{L}_{\operatorname{KL}_3} + \mathcal{L}_{\operatorname{R}_3} \).

Finally, we can make the final prediction by using
\(\bigl(Z_{\pa},\,Z_{\sps},\,Z_{\ch}, \theta_{Y}^{(u)},\theta_{\ch}^{(u)}\bigr)
\)
with loss  \(\mathcal{L}_{\operatorname{cls}}\). In conclusion, we have the following loss during training, where $\lambda_1,\lambda_2,\lambda_3$,$\lambda_4$ and $\lambda_5$ are hyperparameters:
\[
\begin{aligned}
    &\mathcal{L}_{\operatorname{all}} =\mathcal{L}_{\operatorname{cls}}+ \lambda_1 \mathcal{L}_{\operatorname{vae},Z}+\lambda_2
\mathcal{L}_{\operatorname{vae},Y}\\
 &\qquad\qquad +\lambda_3\mathcal{L}_{\operatorname{vae},Z_{\ch}}+\lambda_4 \mathcal{L}_{\ch} + \lambda_5 \mathcal{L}_{Y}.
\end{aligned}
\]
\section{Experiments}
We show the effectiveness of our method compared with existing ones on widely used datasets in domain adaptation. Further details and empirical studies can be found in \cref{sec:ex_app}. 

\begin{table*}[ht]
\centering
\caption{Results on Office-Home (Ar, Cl, Pr, Rw) and PACS (P, A, C, S). A dash ``-'' indicates no reported result. Baseline results are taken from \citet{kong2022partial}.}
\label{table:main_results}
\begin{tabular}{l|ccccc|ccccc}
\hline
\multirow{2}{*}{\textbf{Method}}
& \multicolumn{5}{c|}{\textbf{Office-Home}} 
& \multicolumn{5}{c}{\textbf{PACS}} \\
\cline{2-11}
& \textbf{Ar} & \textbf{Cl} & \textbf{Pr} & \textbf{Rw} & \textbf{Avg}
& \textbf{P} & \textbf{A} & \textbf{C} & \textbf{S} & \textbf{Avg}\\
\hline
DAN \cite{long2015learning}
& 68.3 & 57.9 & 78.5 & 81.9 & 71.6
& -    & -    & -    & -    & -    \\

Source Only \cite{he2016deep}
& 64.6 & 52.3 & 77.6 & 80.7 & 68.8
& 94.5 & 74.9 & 72.1 & 64.7 & 76.6 \\

DANN \cite{ganin2016domain}
& 64.3 & 58.0 & 76.4 & 78.8 & 69.4
& 91.8 & 81.9 & 77.5 & 74.6 & 81.5 \\

DCTN \cite{xu2018deep}
& 66.9 & 61.8 & 79.2 & 77.8 & 71.4
& -    & -    & -    & -    & -    \\

MDAN \cite{zhao2018adversarial}
& -    & -    & -    & -    & -
& 91.4 & 79.1 & 76.0 & 72.0 & 79.6 \\

WBN \cite{mancini2018boosting}
& -    & -    & -    & -    & -
& 97.4 & 89.9 & 89.7 & 58.0 & 83.8 \\

MCD \cite{saito2018maximum}
& 67.8 & 59.9 & 79.2 & 80.9 & 72.0
& 96.4 & 88.7 & 88.9 & 73.9 & 87.0 \\

DANN+BSP \cite{chen2019transferability}
& 66.1 & 61.0 & 78.1 & 79.9 & 71.3
& -    & -    & -    & -    & -    \\

M3SDA \cite{peng2019moment}
& 66.2 & 58.6 & 79.5 & 81.4 & 71.4
& 97.3 & 89.3 & 89.9 & 76.7 & 88.3 \\

CMSS \cite{yang2020curriculum}
& -    & -    & -    & -    & -
& 96.9 & 88.6 & 90.4 & 82.0 & 89.5 \\

LtC-MSDA \cite{wang2020learning}
& -    & -    & -    & -    & -
& 97.2 & 90.2 & 90.5 & 81.5 & 89.8 \\

T-SVDNet \cite{li2021t}
& -    & -    & -    & -    & -
& 98.5 & 90.4 & 90.6 & 85.5 & 91.3 \\

GeNRT \cite{deng2023generative}    & -    & -    & -    & -    & -
& 98.5 & 93.6 & 91.4 & 85.7 & 92.3 \\

iLCC-LCS \cite{liu2022identifiable}  & -    & -    & -    & -    & -
& 95.9 & 86.4 & 81.1 & 86.0 & 87.4 \\

WADN \citep{shui2021aggregating}
& 75.2    & 61.0    & 83.5    & 84.4    & 76.1
& - & - & - & - & - \\

CASR \cite{wang2023class} &72.2 & 61.1 & 82.8 & 82.8 & 74.7
& - & - & - & - & - \\

TFFN \cite{li2023transferable}    & 72.2 & 62.9 & 81.7 & 83.5 & 75.1 
& - & - & - & - & - \\

SSD \cite{li2023multidomain}     & 72.5 &\textbf{64.5}& 81.2 & 83.2 &75.4 

& - & - & - & - & - \\

MIAN-$\gamma$ \cite{park2021information}
& 69.9 & 64.2 & 80.9 & 81.5 & 74.1
& -    & -    & -    & -    & -    \\

iMSDA \cite{kong2022partial}
& 75.4 & 61.4 & 83.5 & 84.5 & 76.2
& 98.5 & \textbf{93.8} & 92.5 & 89.2 & 93.5 \\
\hline
\textbf{GAMA (Ours)}
& \textbf{76.6}    & 62.6   &   \textbf{84.9} & \textbf{84.9}    & \textbf{77.3}
& \textbf{98.8}    & 93.7    & \textbf{92.8}  & \textbf{89.3}    & \textbf{93.7}   \\
\hline
\end{tabular}
\end{table*}

\subsection{Datasets and Baselines}
\paragraph{Datasets.}
We validate our method on two well-known benchmarks for domain adaptation: Office-Home~\cite{venkateswara2017deep} and PACS~\cite{li2017deeper}.
In each dataset, a single domain is designated as the target, and the remaining domains serve as sources.
For Office-Home, we extract features  using a pretrained ResNet50, then apply MLP-based VAEs alongside a classifier.
Meanwhile, for PACS, we employ ResNet18 as the backbone and similarly integrate MLP-based VAEs and a classifier. All metrics are computed by averaging over three random seeds.

\paragraph{Baselines.}
To assess performance, we compare against several baselines, including the Source Only~\cite{he2016deep}  approach and single-source domain adaptation methods such as DAN~\cite{long2015learning}, MCD~\cite{saito2018maximum}, and DANN+BSP~\cite{chen2019transferability}.
We further evaluate our model against leading multi-source domain adaptation techniques, including M3SDA~\cite{peng2019moment}, CMSS~\cite{yang2020curriculum}, LtC-MSDA~\cite{wang2020learning}, and T-SVDNet~\cite{li2021t}. Additionally, we incorporate comparisons with WADN~\cite{shui2021aggregating}, which handles target shift in multi-source scenarios, as well as iMSDA~\cite{kong2022partial}, a recent framework that leverages component-wise identification for MSDA.

\subsection{Numerical Results}

The results for Office-Home and PACS datasets are provided in \cref{table:main_results}.

\paragraph{Office-Home dataset.} \textbf{GAMA} achieves the best performance in most sub-tasks. On average, \textbf{GAMA} surpasses the strongest baseline (iMSDA) by a  margin of 1\%. This is because our model tries to learn the pattern in the target domain while training. By effectively predicting $\theta$, our method is able to make more accurate inferences, leading to better performance.

\paragraph{PACS dataset.} \textbf{GAMA} performs well for this dataset and achieves better accuracy than the best baseline on average. In particular, in the Photo domain, where the accuracy is already high, we have achieved an accuracy of 98.8\%, which means that we have further explored the potential of the data.

\begin{table}[ht]
\centering
\caption{Ablation study on Office-Home comparing GAMA, GAMA-vae, and GAMA-theta.}
\label{tab:ablation_small}
\begin{tabular}{l|ccccc}
\hline
\textbf{Method} & \textbf{Ar} & \textbf{Cl} & \textbf{Pr} & \textbf{Rw} & \textbf{Avg} \\
\hline
GAMA        & 76.6 & 62.6 & 84.9 & 84.9 & 77.3 \\
GAMA-vae    & 74.9 & 60.5 & 83.4 & 84.8 & 75.9 \\
GAMA-theta  & 75.3 & 61.7 & 83.4 & 84.8 & 76.0 \\
\hline
\end{tabular}
\label{ablation}
\end{table}

\subsection{Ablation Study}
To evaluate the effectiveness of our special design to capture $\theta$ in the target domain, we design two model variants: (1) GAMA-vae: we remove all the VAE related losses; (2) GAMA-theta: we remove the losses brought by $\theta$-related VAEs: $ \mathcal{L}_{\operatorname{vae},Y},\mathcal{L}_{\operatorname{vae},Z_{\ch}}$ , we also remove $ \mathcal{L}_{{\ch}}$, and $\mathcal{L}_{Y}$ as some items for calculating these losses are related to $\theta$. Experiment results on the Office-Home dataset are shown in \cref{ablation}. It shows that the VAEs are essential for capturing information to perform adaptation. Moreover, with the $\theta$-related VAEs, one observes an improved accuracy. \looseness=-1
\section{Conclusion}
We develop a general, representation-based domain adaptation framework that can handle different types of distribution shifts. Specifically, we show that learning subspace of the label's Markov blanket representations is often underspecified for domain adaptation in many scenarios. To achieve general domain adaptation, we show that one should partition the subspace of Markov blanket into the subspace of the label's parents, children, and spouses. We then establish identifiability of the joint distribution in the target domain. Our resulting method provides a practical solution to domain adaptation in general settings and outperforms existing methods on various benchmark datasets, highlighting its potential for broader applications. Future works include evaluating the method on larger-scale datasets and extending its application to diverse tasks, such as video, speech, and text.\looseness=-1
\vspace{-0.2em}
\section*{Impact Statement}
Our paper presents a method for improving unsupervised domain adaptation to help AI models better transfer knowledge across different domains. The goal is to make models more efficient, especially in situations where labeled data is scarce. Our work does not introduce new ethical risks, as it focuses purely on enhancing the ability of AI systems to generalize across domains without affecting sensitive areas. This research aims to advance the field of machine learning, making it more practical and effective.

\vspace{-0.2em}
\section*{Acknowledgments}
The authors would like to thank the reviewers for their
helpful comments. We would like to acknowledge the support from NSF Award No. 2229881, AI Institute for Societal Decision Making (AI-SDM), the National Institutes of Health (NIH) under Contract R01HL159805, and grants from Quris AI, Florin Court Capital, and MBZUAI-WIS Joint Program. IN acknowledges the support of the Natural Sciences and Engineering Research Council of Canada (NSERC) Postgraduate Scholarships – Doctoral program.

\bibliography{ms}
\bibliographystyle{icml2025}

%%%%%%%%%%%%%%%%%%%%%%%%%%%%%%%%%%%%%%%%%%%%%%%%%%%%%%%%%%%%

\newpage
\appendix
\onecolumn
\begin{center}
{\LARGE \bf 
Supplementary Material}
\end{center}
\section{Proof of \cref{theorem:identifiability_markov_blanket}}\label{app:proof_identifiability_markov_blanket}
To prove the following theorem, we begin by establishing several intermediate results that are useful. We first prove \cref{proposition:zero_derivative_relation_markov_network}, which is used in the proof of \cref{proposition:identifiability_markov_network}. Building upon these two propositions, we then prove \cref{proposition:partial_disentanglement}. Using \cref{proposition:identifiability_markov_network,proposition:partial_disentanglement}, we proceed to establish \cref{proposition:identifiability_latent_variables}. With these results, we are ready to prove the following theorem, leveraging \cref{proposition:identifiability_markov_network,proposition:identifiability_latent_variables}. It is worth noting that the overall proof strategy is partly inspired by \citet{zhang2024causal}, while ours is considerably more complex as it involves the discrete target variable $Y$ (which is observed in the source domains).
\ThmIdentifiabilityMarkovBlanket*
\begin{proof}
Recall that $\hat{Z}$ denotes the recovered latent variables, $\hat{\mathcal{M}}$ denotes the recovered Markov network, and $\Psi_{Z_i}$ denotes the intimate neighbors of $Z_i$. By \cref{proposition:identifiability_markov_network,proposition:identifiability_latent_variables}, there exists a permutation $\pi$ of $\hat{Z}$, denoted as $\hat{Z}_{\pi}$, such that the following statements hold:
\begin{enumerate}[label=(\alph*)]
\item $\hat{Z}_{\pi(i)}$ is solely a function of a subset of $\{Z_i\}\cup\Psi_{Z_i}$.
\item $\hat{\mathcal{M}}_\pi$ and $\mathcal{M}$ are identical.
\end{enumerate}
By Statement (b), under the faithfulness assumption (specifically the SAF and SUCF assumptions), the moralized graphs of $\hat{\mathcal{G}}$ and $\mathcal{G}$ are identical~\citep[Proposition~2]{zhang2024causal}. Therefore, we have $Z_i\in Z_{\mb}$ if and only if $\hat{Z}_{\pi(i)}\in \hat{Z}_{\mb}$. 

Now suppose $\hat{Z}_{\pi(i)}\in \hat{Z}_{\mb}$, which, by above reasoning, implies $Z_i\in Z_{\mb}$. By Statement (a), $\hat{Z}_{\pi(i)}$ is solely a function of  a subset of $\{Z_i\}\cup\Psi_{Z_i}$. Here, we aim to show $\Psi_{Z_i}\subseteq Z_{\mb}$. Suppose $Z_j\in \Psi_{Z_i}$. By definition, $Z_i$ and $Y$ are adjacent in the Markov network $\mathcal{M}$, and thus $Z_j$ is also adjacent to $Y$ in $\mathcal{M}$ (because $Z_j$ is an intimate neighbor of $Z_i$). This implies $Z_j\in Z_{\mb}$. Therefore, we have $\{Z_i\}\cup\Psi_{Z_i} \subseteq Z_{\mb}$, i.e., $\hat{Z}_{\pi(i)}$ is solely a function of a subset of $Z_{\mb}$. Since this holds for every $\hat{Z}_{\pi(i)}\in \hat{Z}_{\mb}$, we conclude that $\hat{Z}_{\mb}$ is solely a function of a subset of $Z_{\mb}$.

Clearly, we can apply the same reasoning above (and \cref{lemma:identifiability_latent_variables_simple}) in the reverse direction to show that $Z_{\mb}$ is solely a function of a subset of $\hat{Z}_{\mb}$. Since the transformation from $Z$ to $\hat{Z}$ is a diffeomorphism, we conclude that $\hat{Z}_{\mb}$ is an invertible transformation of $Z_{\mb}$.
\end{proof}

\subsection{Proof of \cref{proposition:zero_derivative_relation_markov_network}}
While the proof for the following proposition is inspired by \citet[Proposition~1]{zhang2024causal}, ours involves a discrete target variable $Y$ (that is observed in the source domains), which requires the usage of \citet[Theorem~2]{zheng2023generalized} to handle it.
\begin{proposition}\label{proposition:zero_derivative_relation_markov_network}
Consider the generative process in \cref{eq:data_generating_process}. Suppose that \cref{assumption:sufficient_changes_Z,assumption:sufficient_changes_Y} hold. Let $\hat{Z}$ and $\hat{\mathcal{M}}$ be the recovered latent variables and the recovered Markov network, respectively. By modeling the same generative process, we have the following statements:
\begin{enumerate}[label=(\alph*)]
\item For each $Z_i$ and each $\{\hat{Z}_k,\hat{Z}_l\}\not\in\mathcal{E}(\hat{\mathcal{M}})$, we have
\[
\frac{\partial  Z_i}{\partial \hat{Z}_k}\frac{\partial  Z_i}{\partial \hat{Z}_l}=0.
\]
\item For each $\{Z_i,Z_j\}\in\mathcal{E}(\mathcal{M})$ and each $\{\hat{Z}_k,\hat{Z}_l\}\not\in\mathcal{E}(\hat{\mathcal{M}})$, we have
\[
\frac{\partial  Z_i}{\partial \hat{Z}_k}\frac{\partial  Z_j}{\partial \hat{Z}_l}=0.
\]

\item For each $\{Z_i,Y\}\in\mathcal{E}(\mathcal{M})$ and each $\{\hat{Z}_k,Y\}\not\in\mathcal{E}(\hat{\mathcal{M}})$, we have
\[
\frac{\partial  Z_i}{\partial \hat{Z}_k}=0.
\]
\end{enumerate}
\end{proposition}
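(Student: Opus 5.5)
The plan follows the change-of-variables argument of \citet[Proposition~1]{zhang2024causal}, extended to handle the discrete label $Y$. Since $X=g(Z)=\hat g(\hat Z)$ with both maps smooth and invertible, the map $h:=g^{-1}\circ\hat g$ with $Z=h(\hat Z)$ is a diffeomorphism. The mixing function is shared across domains, and $Y$ is observed in the source domains. So for every domain $u$ and every class $v_c$,
\[
\log \hat P^{(u)}(\hat Z, Y=v_c)=\log P^{(u)}(h(\hat Z),Y=v_c)+\log\left|\det J_h(\hat Z)\right|,
\]
where the Jacobian term depends neither on $u$ nor on $c$. All three statements will come from differentiating this identity, taking differences across domains or classes to remove the Jacobian term, and then invoking the linear independence in \cref{assumption:sufficient_changes_Z,assumption:sufficient_changes_Y}.

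For (a) and (b), fix a pair $\{\hat Z_k,\hat Z_l\}\notin\mathcal{E}(\hat{\mathcal M})$. Since the density is positive and smooth, the absence of an edge in the Markov network means $\partial^2\log\hat P^{(u)}/\partial\hat Z_k\partial\hat Z_l=0$ (Hammersley--Clifford/factorization). Applying the chain rule to the right-hand side yields three kinds of terms:
\begin{itemize}
\item $\sum_i \frac{\partial^2\log P^{(u)}}{\partial Z_i^2}\frac{\partial Z_i}{\partial\hat Z_k}\frac{\partial Z_i}{\partial\hat Z_l}$;
\item $\sum_{\{Z_i,Z_j\}\in\mathcal E(\mathcal M)}\frac{\partial^2\log P^{(u)}}{\partial Z_i\partial Z_j}\Bigl(\frac{\partial Z_i}{\partial\hat Z_k}\frac{\partial Z_j}{\partial\hat Z_l}+\frac{\partial Z_j}{\partial\hat Z_k}\frac{\partial Z_i}{\partial\hat Z_l}\Bigr)$, where pairs not adjacent in $\mathcal M$ drop out because their mixed partials vanish;
\item $\sum_i\frac{\partial\log P^{(u)}}{\partial Z_i}\frac{\partial^2 Z_i}{\partial\hat Z_k\partial\hat Z_l}$,
\end{itemize}
plus the Hessian entry of $\log|\det J_h|$. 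Subtracting the $u_0$ equation from the $u_1,\dots,u_{2n+|\mathcal M|}$ equations cancels the Jacobian term. What remains is a linear combination of the vectors $w(Z,u_k)-w(Z,u_0)$ whose coefficients do not depend on $u$. By \cref{assumption:sufficient_changes_Z} all coefficients vanish. The coefficient of the second-derivative entries gives (a). The coefficient of each edge term gives a symmetric sum, and combining it with (a) gives (b): if $\frac{\partial Z_i}{\partial\hat Z_k}\frac{\partial Z_j}{\partial\hat Z_l}\neq0$, then by (a) we have $\frac{\partial Z_i}{\partial\hat Z_l}=0$ and $\frac{\partial Z_j}{\partial\hat Z_k}=0$, so the symmetric partner term vanishes and the sum could not be zero. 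This last step needs care.

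For (c), fix $\{\hat Z_k,Y\}\notin\mathcal E(\hat{\mathcal M})$. Then $\partial\log\hat P^{(u)}(\hat Z,Y=v_c)/\partial\hat Z_k$ does not depend on $c$. This is the step where the discreteness of $Y$ matters, and I would justify the characterization of a missing edge to a discrete node through \citet[Theorem~2]{zheng2023generalized}. Differentiating once gives $\sum_i\frac{\partial\log P^{(u)}(Z,Y=v_c)}{\partial Z_i}\frac{\partial Z_i}{\partial\hat Z_k}+\partial_{\hat Z_k}\log|\det J_h|$. Taking the difference between classes $c_r$ and $c_1$ removes the Jacobian term. It also removes every $Z_i\notin Z_{\mb}$, because for such $Z_i$ the score $\partial_{Z_i}\log P(Z,Y)$ does not involve $Y$ (the factorization over $\mathcal M$ has no factor containing both $Z_i$ and $Y$). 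This leaves $\sum_{Z_i\in Z_{\mb}}\bigl(\tau(Z,u,c_r)-\tau(Z,u,c_1)\bigr)_i\,\partial Z_i/\partial\hat Z_k=0$ for all admissible pairs $(u,c_r)$. By the linear independence in \cref{assumption:sufficient_changes_Y}, $\partial Z_i/\partial\hat Z_k=0$ for every $Z_i\in Z_{\mb}$, which is exactly (c).

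I expect the main obstacle to be the rigorous treatment of Markov-network non-adjacency involving the discrete $Y$ in a mixed continuous/discrete density. Once that is settled, the rest is bookkeeping of the chain rule, plus the symmetric-term argument in (b).
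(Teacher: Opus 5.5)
Your proposal is correct and follows essentially the same route as the paper's proof. The paper also writes the change of variables with a Jacobian term that is invariant across domains and classes, differences across the domains $u_0,\dots,u_{2n+|\mathcal{M}|}$ and invokes \cref{assumption:sufficient_changes_Z} to get (a), and derives (b) from the symmetric-sum coefficient exactly as you do: $\partial Z_i/\partial\hat Z_k\neq 0$ forces $\partial Z_i/\partial\hat Z_l=0$ by (a), so the partner term vanishes. For (c) it differences across classes using \citet[Theorem~2]{zheng2023generalized} and then applies \cref{assumption:sufficient_changes_Y}, as you propose.
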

\begin{proof}
By definition, we have $X=g(Z)$ and $\hat{Z}=\hat{g}^{-1}(X)$, where $g$ and $\hat{g}$ are diffeomorphisms. Thus, the transformation from $Z$ to $\hat{Z}$, denoted by $v^{-1}$, is a diffeomorphism. Also, we have $\hat{Y}=Y$. By the change-of-variables formula, we obtain
\[
\log P(\hat{Z},\hat{Y}) = \log P(Z,Y) + \log|\det J_v|.
\]
The first-order derivative is
\begin{equation}\label{eq:proof_first_order_derivative}
\frac{\partial\log P(\hat{Z},\hat{Y})}{\partial \hat{Z}_k} = \sum_{i=1}^n \frac{\partial \log P(Z,Y)}{\partial  Z_i}\frac{\partial Z_i}{\partial \hat{Z}_k} + \frac{\partial\log|\det J_v|}{\partial \hat{Z}_k}.
\end{equation}
Let $\hat{Z}_k$ and $\hat{Z}_l$ be latent variables that are not adjacent in the recovered Markov network $\hat{\mathcal{M}}$. The second-order derivative w.r.t. $\hat{Z}_k$ and $\hat{Z}_l$ is then given by
\begin{flalign*} \nonumber
0 = &\sum_{j=1}^n \sum_{i=1}^n \frac{\partial^2 \log P(Z,Y)}{\partial Z_i \partial Z_j} \frac{\partial  Z_j}{\partial \hat{Z}_l} \frac{\partial Z_i}{\partial \hat{Z}_k} + 
\sum_{i=1}^n \frac{\partial \log P(Z,Y)}{\partial  Z_i}\frac{\partial^2 Z_i}{\partial \hat{Z}_k \partial \hat{Z}_l} 
+ \frac{\partial^2\log|\det J_v|}{\partial \hat{Z}_k \partial \hat{Z}_l} \\  \nonumber
=& \sum_{i=1}^n \frac{\partial^2 \log P(Z,Y)}{\partial Z_i^2} \frac{\partial  Z_i}{\partial \hat{Z}_l} \frac{\partial Z_i}{\partial \hat{Z}_k} +  \sum_{\substack{i,j: \\i < j,\\ \{Z_i,Z_j\}\in \mathcal{E}(\mathcal{M})}} \frac{\partial^2 \log P(Z,Y)}{\partial Z_i \partial Z_j} \left(\frac{\partial  Z_j}{\partial \hat{Z}_l} \frac{\partial Z_i}{\partial \hat{Z}_k}+\frac{\partial  Z_i}{\partial \hat{Z}_l} \frac{\partial Z_j}{\partial \hat{Z}_k}\right)+ \\
& \qquad +\sum_{i=1}^n \frac{\partial \log P(Z,Y)}{\partial  Z_i}\frac{\partial^2 Z_i}{\partial \hat{Z}_k \partial \hat{Z}_l} + \frac{\partial^2\log|\det J_v|}{\partial \hat{Z}_k \partial \hat{Z}_l}.
\end{flalign*}
In the derivation above, we leveraged the following property \citep{lin1997factorizing}: if $\hat{Z}_k$ and $\hat{Z}_l$ are not adjacent in the Markov network $\hat{\mathcal{M}}$, then they are conditionally independent given the remaining variables, which implies $\frac{\partial^2 \log P(\hat{Z},\hat{Y})}{\partial \hat{Z}_k \partial \hat{Z}_l}=0$. Similarly, this is also the case for $Z_i$ and $Z_j$.

Now consider the $u_r$ and $u_0$ domains where $r=1,\dots,2n+|\mathcal{M}|$, and take the difference between the equations that correspond to them:
\begin{flalign*}
0=& \sum_{i=1}^n \left(\frac{\partial^2 \log P^{(u_r)}(Z,Y)}{\partial Z_i^2}-\frac{\partial^2 \log P^{(u_0)}(Z,Y)}{\partial Z_i^2} \right)\frac{\partial  Z_i}{\partial \hat{Z}_l} \frac{\partial Z_i}{\partial \hat{Z}_k} \\
& \qquad +  \sum_{\substack{i,j: \\i < j,\\ \{Z_i,Z_j\}\in \mathcal{E}(\mathcal{M})}} \left(\frac{\partial^2 \log P^{(u_r)}(Z,Y)}{\partial Z_i \partial Z_j}-\frac{\partial^2 \log P^{(u_0)}(Z,Y)}{\partial Z_i \partial Z_j} \right)\left(\frac{\partial  Z_j}{\partial \hat{Z}_l} \frac{\partial Z_i}{\partial \hat{Z}_k}+\frac{\partial  Z_i}{\partial \hat{Z}_l} \frac{\partial Z_j}{\partial \hat{Z}_k}\right)+ \\
& \qquad +\sum_{i=1}^n \left(\frac{\partial \log P^{(u_r)}(Z,Y)}{\partial  Z_i}-\frac{\partial \log P^{(u_0)}(Z,Y)}{\partial  Z_i}\right)\frac{\partial^2 Z_i}{\partial \hat{Z}_k \partial \hat{Z}_l}.
\end{flalign*}
We collect the coefficients of the partial derivative terms in the equation above to form a vector, and consider the vectors for $r=1,\dots,2n+|\mathcal{M}|$. Assumption A2 implies that these $2n+|\mathcal{M}|$  vectors are linearly independent. Therefore, for any $\{Z_i,Z_j\} \in \mathcal{E}(\mathcal{M})$ and $\{\hat{Z}_k,\hat{Z}_l\}\not\in\mathcal{E}(\hat{\mathcal{M}})$, the following equations hold:
\begin{flalign}
\frac{\partial  Z_i}{\partial \hat{Z}_l} \frac{\partial Z_i}{\partial \hat{Z}_k} &= 0, \label{eq:proof_derivative_zero_1}\\
\frac{\partial  Z_j}{\partial \hat{Z}_l} \frac{\partial Z_i}{\partial \hat{Z}_k}+\frac{\partial  Z_i}{\partial \hat{Z}_l} \frac{\partial Z_j}{\partial \hat{Z}_k} & = 0, \label{eq:proof_derivative_zero_2}\\
\frac{\partial^2 Z_i}{\partial \hat{Z}_k \partial \hat{Z}_l} & = 0.\nonumber
\end{flalign}
\cref{eq:proof_derivative_zero_1} implies that Statement (a) holds. By way of contradiction for Statement (b), suppose
\begin{equation}\label{eq:proof_not_zero_contradiction}
\frac{\partial  Z_j}{\partial \hat{Z}_l} \frac{\partial Z_i}{\partial \hat{Z}_k}\neq 0 \quad\implies\quad \frac{\partial Z_i}{\partial \hat{Z}_k}\neq 0,
\end{equation}
which, with \cref{eq:proof_derivative_zero_1}, implies $\frac{\partial  Z_i}{\partial \hat{Z}_l}=0$. Substituting it into \cref{eq:proof_derivative_zero_2}, we have $\frac{\partial  Z_j}{\partial \hat{Z}_l} \frac{\partial Z_i}{\partial \hat{Z}_k}=0$, which is contradictory with \cref{eq:proof_not_zero_contradiction}. Therefore, \cref{eq:proof_not_zero_contradiction} must not hold, i.e., 
\[
\frac{\partial  Z_j}{\partial \hat{Z}_l} \frac{\partial Z_i}{\partial \hat{Z}_k}=0,
\]
indicating that Statement (b) holds. It then remains to prove Statement (c).

Now suppose that $\hat{Z}_k$ and $\hat{Y}$ are not adjacent in the Markov network $\hat{\mathcal{M}}$. By \citet[Theorem~2]{zheng2023generalized}, for each $c_r\neq c_1$, we have
\[
\frac{\partial\log P(\hat{Z},\hat{Y}=v_{c_r})}{\partial \hat{Z}_k}-\frac{\partial\log P(\hat{Z},\hat{Y}=v_{c_1})}{\partial \hat{Z}_k} = 0.
\]
With \cref{eq:proof_first_order_derivative}, we obtain
\begin{flalign*}
0 &= \sum_{i=1}^n \left(\frac{\partial \log P^{(u)}(Z,Y=v_{c_r})}{\partial  Z_i}-\frac{\partial \log P^{(u)}(Z,Y=v_{c_1})}{\partial  Z_i}\right)\frac{\partial Z_i}{\partial \hat{Z}_k}\\
&=\sum_{i:\{Z_i,Y\}\in \mathcal{E}(\mathcal{M})} \left(\frac{\partial \log P^{(u)}(Z,Y=v_{c_r})}{\partial  Z_i}-\frac{\partial \log P^{(u)}(Z,Y=v_{c_1})}{\partial  Z_i}\right)\frac{\partial Z_i}{\partial \hat{Z}_k},
\end{flalign*}
where the second line of the equation follows from the same property in  \citet[Theorem~2]{zheng2023generalized}. Under \cref{assumption:sufficient_changes_Y}, there exist $|Z_{\mb}|$ such equations above, and the $|Z_{\mb}|$ vectors formed by collecting those coefficients are linearly independent. This implies that Statement (c) holds, i.e.,
\[
\frac{\partial Z_i}{\partial \hat{Z}_k}=0.
\]
\end{proof}

\subsection{Proof of \cref{proposition:identifiability_markov_network}}
The proof for the following proposition is similar to \citet[Theorem~2]{zhang2024causal}.
\begin{proposition}[Identifiability of Markov network]\label{proposition:identifiability_markov_network}
Consider the generative process in \cref{eq:data_generating_process}. Suppose that \cref{assumption:sufficient_changes_Z,assumption:sufficient_changes_Y} hold. By modeling the same generative process, the Markov network $\mathcal{M}$ is identifiable up to isomorphism.
\end{proposition}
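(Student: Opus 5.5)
The plan follows \citet[Theorem~2]{zhang2024causal}: use \cref{proposition:zero_derivative_relation_markov_network} to build a permutation between true and learned variables, then show that the learned Markov network, which has a minimal number of edges, must coincide with $\mathcal{M}$ under that permutation. Since $\hat{Y}=Y$ is observed in the source domains, the label node is matched to itself automatically. Only the latent nodes need to be aligned.

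\textbf{Step 1 (permutation from the Jacobian).} Let $J=\left(\partial Z_i/\partial \hat{Z}_k\right)_{i,k}$ be the Jacobian of the diffeomorphism $v$ from $\hat{Z}$ to $Z$; it is invertible everywhere. Expanding its determinant via the Leibniz formula shows there is at least one permutation $\sigma$ with $\partial Z_i/\partial \hat{Z}_{\sigma(i)}\neq 0$ for all $i$, at least at some point. Because the entries are continuous, the set of points where this holds is open, and I will argue on that open set.

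\textbf{Step 2 (true edges map to learned edges).} Take an edge $\{Z_i,Z_j\}\in\mathcal{E}(\mathcal{M})$. I claim $\{\hat{Z}_{\sigma(i)},\hat{Z}_{\sigma(j)}\}\in\mathcal{E}(\hat{\mathcal{M}})$. If not, Statement (b) of \cref{proposition:zero_derivative_relation_markov_network} gives $\frac{\partial Z_i}{\partial \hat{Z}_{\sigma(i)}}\frac{\partial Z_j}{\partial \hat{Z}_{\sigma(j)}}=0$, contradicting the choice of $\sigma$. Edges to the label work the same way. If $\{Z_i,Y\}\in\mathcal{E}(\mathcal{M})$ but $\{\hat{Z}_{\sigma(i)},Y\}\notin\mathcal{E}(\hat{\mathcal{M}})$, Statement (c) gives $\partial Z_i/\partial\hat{Z}_{\sigma(i)}=0$, again a contradiction. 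Hence $\sigma$ induces an injective map from $\mathcal{E}(\mathcal{M})$ into $\mathcal{E}(\hat{\mathcal{M}})$, and so $|\mathcal{E}(\mathcal{M})|\le|\mathcal{E}(\hat{\mathcal{M}})|$.

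\textbf{Step 3 (minimality gives equality).} The true model $(g,Z,\mathcal{M})$ is itself a valid model of the same observed process. So a learned model with a minimal number of edges satisfies $|\mathcal{E}(\hat{\mathcal{M}})|\le|\mathcal{E}(\mathcal{M})|$. Together with Step 2, the injection is a bijection. Therefore $\sigma$ (with $Y\mapsto Y$) is a graph isomorphism between $\mathcal{M}$ and $\hat{\mathcal{M}}$.

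\textbf{Main obstacle.} The delicate point is that Steps 1--2 are pointwise, while the Markov networks are global objects. The network structure is the same at every point, since it is determined by vanishing cross-derivatives of the log-density everywhere. The argument therefore only needs one point where $\sigma$ has all nonzero diagonal entries, plus the fact that Statements (b) and (c) hold at that point. I will have to check that \cref{proposition:zero_derivative_relation_markov_network} is indeed a pointwise identity valid at every $Z$. It is, because \cref{assumption:sufficient_changes_Z,assumption:sufficient_changes_Y} are stated for each value of $Z$.

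A secondary subtlety is that Statement (c) relies on $\hat{Y}=Y$ and \cref{assumption:sufficient_changes_Y}. This is what lets edges incident to $Y$ be handled without a product condition, so $Y$ must be kept fixed rather than allowed into the permutation.
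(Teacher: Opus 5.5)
Your proposal is correct and follows the paper's proof essentially step for step. The paper picks a permutation with nonzero diagonal Jacobian entries, uses Statements (b) and (c) of \cref{proposition:zero_derivative_relation_markov_network} to show that every true edge (including edges to $Y$) is present in $\hat{\mathcal{M}}$ under that permutation, and then uses edge-minimality to upgrade the supergraph relation to an isomorphism. Your explicit handling of the pointwise issue, fixing one point where the permuted diagonal is nonzero and relying on the zero-derivative relations holding at every $Z$, is a slight tightening of a step the paper leaves implicit.
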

\begin{proof}
Since the transformation from $\hat{Z}$ to $Z$ is a diffeomorphism, there exists a permutation such that the diagonal entries in the permuted Jacobian matrix of such transformation are nonzero (e.g., see \citet[Lemma~2]{zhang2024causal} or \citet{strang2006linear,strang2016introduction}), which indicates
\begin{equation} \label{eq:proof_markov_network_c1}
\frac{\partial  Z_i}{\partial \hat{Z}_{\pi(i)}} \neq 0, \quad i=1,\dots,n.
\end{equation}
Let $Z_i$ and $Z_j$ be two latent variables that are adjacent in the true Markov network $\mathcal{M}$, but $\hat{Z}_{\pi(i)}$ and $\hat{Z}_{\pi(j)}$ are not adjacent in the recovered Markov network $\hat{\mathcal{M}}$. With \cref{proposition:zero_derivative_relation_markov_network}, we obtain
\[
\frac{\partial  Z_i}{\partial \hat{Z}_{\pi(i)}} \frac{\partial  Z_j}{\partial \hat{Z}_{\pi(j)}} =0,
\]
which is contradictory with \cref{eq:proof_markov_network_c1}. Now suppose $Z_i$ and $\hat{Y}$ are adjacent in the true Markov network $\mathcal{M}$, but $\hat{Z}_{\pi(i)}$ and $Y$ are not adjacent in the recovered Markov network $\hat{\mathcal{M}}$. With \cref{proposition:zero_derivative_relation_markov_network}, we obtain
\[
\frac{\partial  Z_i}{\partial \hat{Z}_{\pi(i)}} =0,
\]
which is contradictory with \cref{eq:proof_markov_network_c1}. Thus, we have proved that $\hat{\mathcal{M}}_\pi$ is a super-graph of $\mathcal{M}$, i.e., all edges in $\mathcal{M}$ are present in $\hat{\mathcal{M}}_\pi$. Since we apply sparsity constraint on $\hat{\mathcal{M}}$ during estimation such that it has smallest number of edges, we conclude that $\hat{\mathcal{M}}$ and $\mathcal{M}$ must be isomorphic.
\end{proof}

\subsection{Proof of \cref{proposition:partial_disentanglement}}
\begin{proposition}\label{proposition:partial_disentanglement}
Consider the generative process in \cref{eq:data_generating_process}. Suppose that \cref{assumption:sufficient_changes_Z,assumption:sufficient_changes_Y} hold. Let $\hat{Z}$ and $\hat{\mathcal{M}}$ be the recovered latent variables and the recovered Markov network, respectively. By modeling the same generative process, we have the following statements:
\begin{enumerate}[label=(\alph*)]
\item For each $Z_i$ and each $\{\hat{Z}_k,\hat{Z}_l\}\not\in\mathcal{E}(\hat{\mathcal{M}})$, $Z_i$ is a function of at most one of $\hat{Z}_k$ and $\hat{Z}_l$.
\item For each $\{Z_i,Z_j\}\in\mathcal{E}(\mathcal{M})$ and each $\{\hat{Z}_k,\hat{Z}_l\}\not\in\mathcal{E}(\hat{\mathcal{M}})$, at most one of $Z_i$ and $Z_j$ is a function of $\hat{Z}_k$ and $\hat{Z}_l$.
\item For each $\{Z_i,Y\}\in\mathcal{E}(\mathcal{M})$ and each $\{\hat{Z}_k,Y\}\not\in\mathcal{E}(\hat{\mathcal{M}})$, $Z_i$ is not a function of $\hat{Z}_k$.
\end{enumerate}
\end{proposition}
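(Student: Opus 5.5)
The plan is to obtain \cref{proposition:partial_disentanglement} by upgrading the pointwise statements of \cref{proposition:zero_derivative_relation_markov_network} to global functional statements. The change-of-variables identity there holds at every value of $Z$, and \cref{assumption:sufficient_changes_Z,assumption:sufficient_changes_Y} are imposed for each value of $Z$. Hence the product identities in parts (a)--(c) of \cref{proposition:zero_derivative_relation_markov_network} hold everywhere on $\mathbb{R}^n$, viewing $Z=v(\hat{Z})$ as a function of $\hat{Z}$ through the diffeomorphism $v$. Throughout, "$Z_i$ is a function of $\hat{Z}_k$" is read as "$\partial Z_i/\partial \hat{Z}_k$ is not identically zero", and "not a function of $\hat{Z}_k$" as this derivative vanishing identically.

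Statement (c) is immediate. Part (c) of \cref{proposition:zero_derivative_relation_markov_network} gives $\partial Z_i/\partial\hat{Z}_k\equiv 0$ on all of $\mathbb{R}^n$. Since $\mathbb{R}^n$ is connected, $Z_i$ is then constant along the $\hat{Z}_k$ coordinate, so it does not depend on $\hat{Z}_k$.

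For Statement (a), we have $\frac{\partial Z_i}{\partial\hat{Z}_k}\frac{\partial Z_i}{\partial\hat{Z}_l}=0$ pointwise, but we need a global conclusion: one of the two factors vanishes identically. This is the main obstacle, since a product of continuous functions can vanish everywhere while each factor is nonzero on some disjoint open set.

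\begin{itemize}
\item The first approach is the one used by \citet{zhang2024causal}. The proof of \cref{proposition:zero_derivative_relation_markov_network} also yields $\frac{\partial^2 Z_i}{\partial\hat{Z}_k\partial\hat{Z}_l}\equiv0$. Therefore $\partial Z_i/\partial\hat{Z}_k$ does not depend on $\hat{Z}_l$, and $\partial Z_i/\partial\hat{Z}_l$ does not depend on $\hat{Z}_k$. Together these give an additive decomposition $Z_i=a(\hat{Z}_{-l})+b(\hat{Z}_{-k})$.
\item Pointwise vanishing of the product of the two partials then holds across independent coordinates. Fix all other coordinates and suppose $\partial_k a\neq0$ at some $\hat{Z}_k$ and $\partial_l b\neq0$ at some $\hat{Z}_l$. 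Combining these two values gives a single point where both partials are nonzero, which is a contradiction. So on each slice one of the two vanishes.
\item To pass from slices to the whole space, I would argue by continuity and connectedness of the remaining coordinates. Alternatively, I would adopt the paper's convention that "function of" means structural dependence, i.e., that the Jacobian support pattern is generic and constant, as in \citet{zhang2024causal}. Under that convention the pointwise zero pattern directly gives the support statement.
\end{itemize}

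Statement (b) follows the same way from part (b) of \cref{proposition:zero_derivative_relation_markov_network}. That part gives $\frac{\partial Z_i}{\partial\hat{Z}_k}\frac{\partial Z_j}{\partial\hat{Z}_l}=0$, and by the symmetric role of $k$ and $l$ also $\frac{\partial Z_i}{\partial\hat{Z}_l}\frac{\partial Z_j}{\partial\hat{Z}_k}=0$. Suppose both $Z_i$ and $Z_j$ depended on both $\hat{Z}_k$ and $\hat{Z}_l$. Then all four partials would be nonzero somewhere. Using the decomposition and slice argument above, we can pick a common point at which $\partial Z_i/\partial\hat{Z}_k\neq0$ and $\partial Z_j/\partial\hat{Z}_l\neq0$, contradicting the identity. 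Hence at most one of $Z_i$ and $Z_j$ is a function of both $\hat{Z}_k$ and $\hat{Z}_l$.
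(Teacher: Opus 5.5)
\paragraph{Where the proof breaks.} Your treatment of (c) is correct and is the same as the paper's. The gap is in (a) and (b), at the step where you pass from slices to the whole space ``by continuity and connectedness.''

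Write $w$ for the coordinates other than $\hat Z_k,\hat Z_l$, and let
$A=\{w:\partial Z_i/\partial\hat Z_k(\cdot,w)\not\equiv0\}$ and $B=\{w:\partial Z_i/\partial\hat Z_l(\cdot,w)\not\equiv0\}$.
Both sets are open, and your slice argument shows they are disjoint. Nothing forces $A\cup B=\mathbb{R}^{n-2}$, however. The slices on which both partials vanish can separate $A$ from $B$, so connectedness gives nothing.

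Concretely, take a third coordinate $\hat Z_m$ and smooth $\phi,\psi$ with $\phi=0$ on $(-\infty,1]$ and $\psi=0$ on $[-1,\infty)$, and set
\[
Z_i=\phi(\hat Z_m)\,\hat Z_k+\psi(\hat Z_m)\,\hat Z_l .
\]
Then $\frac{\partial Z_i}{\partial\hat Z_k}\frac{\partial Z_i}{\partial\hat Z_l}=\phi\psi\equiv0$ and $\frac{\partial^2 Z_i}{\partial\hat Z_k\partial\hat Z_l}\equiv0$, yet $Z_i$ depends on both $\hat Z_k$ and $\hat Z_l$. So no argument that uses only row $i$ of the Jacobian (the product identity plus the mixed-partial identity) can prove (a), and the same objection applies to your (b). Your fallback of declaring the Jacobian support ``generic and constant'' is not a convention the paper adopts. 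It assumes exactly what has to be proved.

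\paragraph{The missing ingredient.} What is missing is information from all rows, namely invertibility of $\partial Z/\partial\hat Z$ at \emph{every} point. The paper's sketch brings this in through \cref{proposition:identifiability_markov_network} together with an intermediate-value argument. Here is one way to carry it out.
\begin{enumerate}
\item At each point $\hat z$ there is a permutation $\sigma$ with $\partial Z_r/\partial\hat Z_{\sigma(r)}\neq0$ for all $r$. By the argument of \cref{proposition:identifiability_markov_network} (using edge minimality), $\sigma$ is an isomorphism $\mathcal{M}\to\hat{\mathcal{M}}$.
\item Let $N[m]$ be the closed neighbourhood of $\hat Z_m$ among the latent nodes of $\hat{\mathcal{M}}$. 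Apply statement (a) of \cref{proposition:zero_derivative_relation_markov_network} to the column pair $\{\sigma(i),k\}$. Apply statement (b) to each edge $\{Z_i,Z_j\}$ with column pair $\{k,\sigma(j)\}$. Together these show that $\partial Z_i/\partial\hat Z_k\neq0$ at $\hat z$ forces $k\in N[\sigma(i)]$ and $N[\sigma(i)]\subseteq N[k]$.
\item Applying this to two permutations that are both valid at the same point gives $N[\sigma(i)]=N[\sigma'(i)]$. Validity of a fixed $\sigma$ is an open condition, so $N[\sigma(i)]$ is locally constant, hence constant on the connected domain.
\item Now suppose $Z_i$ depended on $\hat Z_k$ at one point and on $\hat Z_l$ at another. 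Then $l\in N[\sigma(i)]\subseteq N[k]$, contradicting non-adjacency, which proves (a).
\item Statement (b) follows the same way, using that $\sigma(i)$ and $\sigma(j)$ are adjacent. The mixed-partial decomposition is not needed.
\end{enumerate}

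\paragraph{The form of (b).} Your conclusion for (b), ``at most one of $Z_i,Z_j$ is a function of both,'' is already implied by (a). What is actually used later, in Case~1 of the proof of \cref{lemma:identifiability_latent_variables_simple}, is the cross statement: $Z_i$ depending on $\hat Z_k$ and $Z_j$ depending on $\hat Z_l$ cannot both occur. Your contradiction argument uses only $\partial Z_i/\partial\hat Z_k$ and $\partial Z_j/\partial\hat Z_l$, so it targets exactly this cross form. State it that way.
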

\begin{proofsketch}
By \cref{proposition:zero_derivative_relation_markov_network}, for each $\{Z_i,Y\}\in\mathcal{E}(\mathcal{M})$ and each $\{\hat{Z}_k,Y\}\not\in\mathcal{E}(\hat{\mathcal{M}})$, we have
\[
\frac{\partial  Z_i}{\partial \hat{Z}_k}=0,
\]
which implies that Statement (c) holds. Furthermore, by Statements (a) and (b) of \cref{proposition:zero_derivative_relation_markov_network}, as well as \cref{proposition:identifiability_markov_network}, the same proof strategy of \citet[Theorem~1]{zhang2024causal} involving Intermediate Value Theorem can be used to show that Statements (a) and (b) of this proposition hold.
\end{proofsketch}

\subsection{Proof of \cref{proposition:identifiability_latent_variables}}
The proof here is partly inspired by \citet[Theorem~3]{zhang2024causal}, while ours involves a discrete target variable $Y$ (that is observed in the source domains).
\begin{proposition}[Identifiability of latent variables]\label{proposition:identifiability_latent_variables}
Consider the generative process in \cref{eq:data_generating_process}. Suppose that \cref{assumption:sufficient_changes_Z,assumption:sufficient_changes_Y} hold. Let $\hat{Z}$ be the recovered latent variables, and $\Psi_{Z_i}$ be the intimate neighbors of $Z_i$. By modeling the same generative process, there exists a permutation $\pi$ of $\hat{Z}$, denoted as $\hat{Z}_{\pi}$, such that $\hat{Z}_{\pi(i)}$ is solely a function of a subset of $\{Z_i\}\cup\Psi_{Z_i}$.
\end{proposition}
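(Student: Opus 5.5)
We would follow the strategy of \citet[Theorem~3]{zhang2024causal}, now working on the augmented Markov network that includes $Y$. First, by \cref{proposition:identifiability_markov_network}, we fix a permutation $\pi$ under which $\hat{\mathcal{M}}_\pi$ and $\mathcal{M}$ coincide, including the edges incident to $Y$. As in that proof, $\pi$ can be chosen so that $\partial Z_i/\partial \hat{Z}_{\pi(i)}\neq 0$ for every $i$. We will keep this single $\pi$ throughout and study the inverse map $\hat{Z}=v^{-1}(Z)$. Since $v^{-1}$ is a diffeomorphism, the zero patterns established for $\partial Z_i/\partial\hat{Z}_k$ in \cref{proposition:zero_derivative_relation_markov_network,proposition:partial_disentanglement} can be transferred to the other direction by symmetry. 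Concretely, we rerun \cref{proposition:zero_derivative_relation_markov_network,proposition:partial_disentanglement} with the roles of $(Z,\mathcal{M})$ and $(\hat{Z},\hat{\mathcal{M}})$ swapped. This is legitimate because both sides model the same generative process and the sufficient-change assumptions transfer through the change of variables. The swapped statements say: for each $\{Z_k,Z_l\}\notin\mathcal{E}(\mathcal{M})$ and each $\{\hat{Z}_{\pi(i)},\hat{Z}_{\pi(j)}\}\in\mathcal{E}(\hat{\mathcal{M}})$, at most one of $\hat{Z}_{\pi(i)},\hat{Z}_{\pi(j)}$ depends on $Z_k$ and $Z_l$. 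They also give the corresponding statement for a single $\hat{Z}_{\pi(i)}$.

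Next, fix $i$ and suppose $\hat{Z}_{\pi(i)}$ depends on some $Z_k$ with $k\neq i$. We want to show $Z_k\in\Psi_{Z_i}$, i.e., $Z_k$ is adjacent to $Z_i$ and to every other neighbor of $Z_i$ in $\mathcal{M}$, including $Y$ when $Y$ is a neighbor of $Z_i$. We argue by contradiction, in two steps.

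For latent neighbors, suppose $Z_k$ is not adjacent to $Z_i$. Then $\{Z_k,Z_i\}\notin\mathcal{E}(\mathcal{M})$. Since $\partial \hat{Z}_{\pi(i)}/\partial Z_i\neq 0$ generically (dual to the diagonal condition), $\hat{Z}_{\pi(i)}$ would depend on both $Z_k$ and $Z_i$. This contradicts the swapped Statement (a). Similarly, if $Z_j$ is a neighbor of $Z_i$ not adjacent to $Z_k$, then $\{\hat{Z}_{\pi(i)},\hat{Z}_{\pi(j)}\}\in\mathcal{E}(\hat{\mathcal{M}})$. Here $\hat{Z}_{\pi(i)}$ depends on $Z_k$ and $\hat{Z}_{\pi(j)}$ depends on $Z_j$, which contradicts the swapped Statement (b). Making ``depends'' precise needs the Intermediate Value Theorem argument from \citet{zhang2024causal}, so that the product of derivatives is nonzero at a common point.

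For the label, if $Y$ is adjacent to $Z_i$ but not to $Z_k$, we use the swapped Statement (c) of \cref{proposition:zero_derivative_relation_markov_network}. Its swapped form gives $\partial\hat{Z}_{\pi(i)}/\partial Z_k=0$ whenever $\{\hat{Z}_{\pi(i)},Y\}\in\mathcal{E}(\hat{\mathcal{M}})$ and $\{Z_k,Y\}\notin\mathcal{E}(\mathcal{M})$. This contradicts the assumed dependence. Together, the two steps show that every $Z_k$ entering $\hat{Z}_{\pi(i)}$ lies in $\{Z_i\}\cup\Psi_{Z_i}$.

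We expect the main obstacle to be the pointwise-to-global issue. \cref{proposition:zero_derivative_relation_markov_network} gives products of derivatives vanishing at each point, but different factors may vanish on different regions. Turning this into ``$\hat{Z}_{\pi(i)}$ is not a function of $Z_k$'' globally requires connectedness of $\mathbb{R}^n$, continuity of the Jacobian, and the Intermediate Value Theorem, as in \cref{proposition:partial_disentanglement}. A related difficulty is justifying the swapped application, in particular that \cref{assumption:sufficient_changes_Y} holds for $\hat{Z}$. We would handle this by noting that $\hat{Z}_{\mb}$ relates to $Z$ through an invertible Jacobian, so linear independence of the $\tau$-vectors is preserved.
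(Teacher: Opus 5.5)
Your route differs from the paper's: you rerun \cref{proposition:zero_derivative_relation_markov_network,proposition:partial_disentanglement} with $(Z,\mathcal M)$ and $(\hat Z,\hat{\mathcal M})$ swapped, where the paper uses a linear-algebra lemma. As written, though, the argument has a concrete gap.

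\textbf{The failing step.} The step that fails is ``$\partial\hat Z_{\pi(i)}/\partial Z_i\neq 0$ generically (dual to the diagonal condition)''. A nonzero diagonal of $\partial Z/\partial\hat Z_\pi$ says nothing about the diagonal of its inverse. For example, let $Z_1,Z_2,Z_3$ form a triangle component of $\mathcal M$, and let $Z_{1:3}=A\hat Z_{1:3}$ with $A=\begin{pmatrix}1&1&0\\1&1&1\\0&1&1\end{pmatrix}$, identity elsewhere. Then $\det A=-1$, and the identity permutation gives $\hat{\mathcal M}_\pi=\mathcal M$ and satisfies your forward diagonal condition. Yet $A^{-1}=\begin{pmatrix}0&1&-1\\1&-1&1\\-1&1&0\end{pmatrix}$, so $\hat Z_1=Z_2-Z_3$ does not depend on $Z_1$ anywhere. (The proposition still holds here, since all three are mutual intimate neighbors, but your derivation of it does not.) Both of your contradictions rest on this inverse diagonal: the non-adjacent case needs $\hat Z_{\pi(i)}$ to depend on $Z_i$, and the neighbor case needs $\hat Z_{\pi(j)}$ to depend on $Z_j$. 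So with the $\pi$ you fixed, the argument does not go through.

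\textbf{Repair.} The fix is available inside your framework. Choose $\pi$ so that $\partial\hat Z_{\pi(i)}/\partial Z_i\neq 0$ for all $i$ at some point, which is possible since $\partial\hat Z/\partial Z$ is invertible. Then rederive the isomorphism for this $\pi$ from your swapped Statements (b) and (c). They give $\mathcal E(\hat{\mathcal M}_\pi)\subseteq\mathcal E(\mathcal M)$ for both latent--latent and latent--$Y$ edges. Equality follows because, by \cref{proposition:identifiability_markov_network}, both graphs have the same number of edges of each type.

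\textbf{Transfer of assumptions.} This is true but needs the right reason; write $J=\partial Z/\partial\hat Z$. For \cref{assumption:sufficient_changes_Y}, the reason is not that $\hat Z_{\mb}$ is invertibly related to $Z_{\mb}$. That is \cref{theorem:identifiability_markov_blanket}, which depends on this proposition, so using it would be circular. Instead, the $c$-differences of $\partial\log P^{(u)}(\hat Z,Y=v_c)/\partial\hat Z$ equal $J^\top$ applied to the $\tau$-differences padded with zeros outside $Z_{\mb}$. Since $J^\top$ is injective and the results are supported on $\hat Z_{\mb}$, with $|\hat Z_{\mb}|=|Z_{\mb}|$, linear independence is preserved. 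For \cref{assumption:sufficient_changes_Z}, the domain differences on the $\hat Z$ side are images of those of $w$ under $(g,H)\mapsto(J^\top g,\,J^\top HJ+\sum_i g_i\nabla^2_{\hat Z}Z_i)$. This map is injective, and the images vanish off the diagonal and $\mathcal E(\hat{\mathcal M})$, with $|\hat{\mathcal M}|=|\mathcal M|$. With these repairs, your argument is a valid alternative.

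\textbf{Comparison with the paper.} The paper sidesteps both issues. It proves the forward \cref{lemma:identifiability_latent_variables_simple} with the same $\pi$ as in \cref{proposition:identifiability_markov_network}. This gives $(\partial Z/\partial\hat Z_\pi)_{ir}=0$ for $Z_r\notin\{Z_i\}\cup\Psi_{Z_i}$. It then passes this zero pattern to the inverse Jacobian via \citet[Proposition~3]{zhang2024causal}: the intimate-neighbor support pattern is closed under products, hence under inversion. That route needs sufficient changes only on the true side and no second choice of permutation. Yours trades this linear-algebra lemma for a second pass of the derivative machinery.
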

\begin{proof}
We first prove the following lemma.
\begin{lemma}\label{lemma:identifiability_latent_variables_simple}
There exists a permutation $\pi$ of $\hat{Z}$, denoted as $\hat{Z}_\pi$, such that $Z_i$ is solely a function of a subset of $\hat{Z}_{\pi(i)}\cup\{\hat{Z}_{\pi(r)} \,|\, Z_r \in{\Psi}_{Z_i}\}$.
\end{lemma}

Using \cref{proposition:identifiability_markov_network} and its proof, there exists a permutation $\pi$ of $\hat{Z}$, denoted as $\hat{Z}_{\pi}$, such that the Markov networks $\mathcal{M}$ and $\hat{\mathcal{M}}_\pi$ are identical, and that $Z_i$ is a function of $\hat{Z}_{\pi(i)}$.

Suppose $Z_j$ is not adjacent to $Z_i$ in Markov network $\mathcal{M}$. This implies that $\hat{Z}_{\pi(i)}$ and $\hat{Z}_{\pi(j)}$ are not adjacent in $\hat{\mathcal{M}}$. Using \cref{proposition:partial_disentanglement}, $Z_i$ is a function of at most one of $\hat{Z}_{\pi(i)}$ and $\hat{Z}_{\pi(j)}$. Since $Z_i$ is a function of $\hat{Z}_{\pi(i)}$ by definition, $Z_i$ must not be a function of $\hat{Z}_{\pi(j)}$.

Now suppose that $Z_j$ is adjacent to $Z_i$, but not adjacent to some other neighbor of $Z_i$. We consider the following two cases:
\begin{itemize}
\item Case 1: $Z_j$ is not adjacent to $Z_k$, while $Z_k$ is adjacent to $Z_i$. This implies that $\hat{Z}_{\pi(j)}$ and $\hat{Z}_{\pi(k)}$ are not adjacent in $\hat{\mathcal{M}}$. Using \cref{proposition:partial_disentanglement}, at most one of $Z_i$ and $Z_k$ is a function of $\hat{Z}_{\pi(j)}$ and $\hat{Z}_{\pi(k)}$. Since $Z_k$ is a function of $\hat{Z}_{\pi(k)}$ by definition, $Z_i$ cannot be a function of $\hat{Z}_{\pi(j)}$.
\item Case 2: $Z_j$ is not adjacent to $Y$, while $Y$ is adjacent to $Z_i$. This implies that $\hat{Z}_{\pi(j)}$ and $Y$ are not adjacent in $\hat{\mathcal{M}}$. Using \cref{proposition:partial_disentanglement}, $Z_i$ cannot be a function of $\hat{Z}_{\pi(j)}$. 
\end{itemize}

Thus, we have proved \cref{lemma:identifiability_latent_variables_simple}. Suppose $Z_r\not\in \{Z_i\}\cup{\Psi}_{Z_i}$, which, by \cref{lemma:identifiability_latent_variables_simple}, implies that $Z_i$ cannot be a function of $\hat{Z}_{\pi(r)}$, i.e.,
\[
\left(\frac{\partial Z}{\partial \hat{Z}_{\pi}}\right)_{ir}=\frac{\partial Z_i}{\partial \hat{Z}_{\pi(r)}}=0.
\]
Using \citet[Proposition~3]{zhang2024causal} w.r.t. $\frac{\partial Z}{\partial \hat{Z}_{\pi}}$, we conclude that
\[
\left(\frac{\partial Z}{\partial \hat{Z}_{\pi}}\right)_{ir}^{-1}=0
\]
and therefore
\[
\frac{\partial \hat{Z}_{\pi(i)}}{\partial Z_r}=\left(\frac{\partial \hat{Z}_{\pi}}{\partial Z}\right)_{ir}=\left(\frac{\partial Z}{\partial \hat{Z}_{\pi}}\right)_{ir}^{-1}=0.
\]
That is, $\hat{Z}_{\pi(i)}$ must not be a function of $Z_r$.  This implies that $\hat{Z}_{\pi(i)}$ is solely a function of a subset of $\{Z_i\}\cup\Psi_{Z_i}$.
\end{proof}

\section{Proof of \cref{theorem:identifiability_parents_children_spouses}}\label{app:proof_identifiability_parents_children_spouses}
The proof of the following theorem shares similar spirit with that of \cref{theorem:identifiability_markov_blanket}.
\ThmIdentifiabilityParentsChildrenSpouses*
\begin{proof}
Recall that $\hat{Z}$ denotes the recovered latent variables, $\hat{\mathcal{M}}$ denotes the recovered Markov network, and $\Psi_{Z_i}$ denotes the intimate neighbors of $Z_i$. By \cref{proposition:identifiability_markov_network,proposition:identifiability_latent_variables}, there exists a permutation $\pi$ of $\hat{Z}$, denoted as $\hat{Z}_{\pi}$, such that the following statements hold: 
\begin{enumerate}[label=(\alph*)]
\item $\hat{Z}_{\pi(i)}$ is solely a function of a subset of $\{Z_i\}\cup\Psi_{Z_i}$.
\item $\hat{\mathcal{M}}_\pi$ and $\mathcal{M}$ are identical.
\end{enumerate}

By Statement (b), under the faithfulness assumption (specifically the SAF and SUCF assumptions), the moralized graphs of $\hat{\mathcal{G}}$ and $\mathcal{G}$ are identical~\citep[Proposition~2]{zhang2024causal}. Therefore, we have $Z_i\in Z_{\mb}$ if and only if $\hat{Z}_{\pi(i)}\in \hat{Z}_{\mb}$.

Consider a partition of $\hat{Z}_{\mb}$, denoted as $\hat{Z}_{S_1}$, $\hat{Z}_{S_2}$, and $\hat{Z}_{S_3}$, where
\[
\hat{Z}_{S_1}\coloneqq \{\hat{Z}_{\pi(k)} \,|\, Z_k \in Z_{\pa}\},\qquad
\hat{Z}_{S_2}\coloneqq \{\hat{Z}_{\pi(k)} \,|\, Z_k \in Z_{\ch}\},\qquad \textrm{and}\qquad
\hat{Z}_{S_3}\coloneqq \{\hat{Z}_{\pi(k)} \,|\, Z_k \in Z_{\sps}\}.
\]
Now suppose $\hat{Z}_{\pi(i)}\in \hat{Z}_{S_1}$, which, by definition, implies $Z_i\in Z_{\pa}$. By Statement (a), $\hat{Z}_{\pi(i)}$ is solely a function of  a subset of $\{Z_i\}\cup\Psi_{Z_i}$. Under \cref{assumption:graphical_criterion}, we have $\Psi_{Z_i}\subseteq Z_{\pa}$. This implies $\{Z_i\}\cup\Psi_{Z_i} \subseteq Z_{\pa}$, i.e., $\hat{Z}_{\pi(i)}$ is solely a function of a subset of $Z_{\pa}$. Since this holds for every $\hat{Z}_{\pi(i)}\in \hat{Z}_{S_1}$, we conclude that $\hat{Z}_{S_1}$ is solely a function of a subset of $Z_{\pa}$. Clearly, we can apply the same reasoning (and \cref{lemma:identifiability_latent_variables_simple}) in the reverse direction to show that $Z_{\pa}$ is solely a function of a subset of $\hat{Z}_{S_1}$. Since the transformation from $Z$ to $\hat{Z}$ is a diffeomorphism, we conclude that $\hat{Z}_{S_1}$ is an invertible transformation of $Z_{\pa}$.

The same reasoning above can be used to show that $\hat{Z}_{S_2}$ and $\hat{Z}_{S_3}$ are invertible transformations of $Z_{\ch}$ and $Z_{\sps}$, respectively.
\end{proof}

\section{Proof of \cref{proposition:conditional_distribution_from_X_to_Z} and \cref{corollary:conditional_distribution_from_X_to_Zhat}}

\subsection{Proof of \cref{proposition:conditional_distribution_from_X_to_Z}}\label{app:proof_conditional_distribution_from_X_to_Z}

\begin{repproposition}{proposition:conditional_distribution_from_X_to_Z}
Consider the generative process in \cref{eq:data_generating_process}. We have
\[
P^\tau(Y=v_k\mid X)=\frac{P^\tau(Z_{\ch}\mid Y=v_k,Z_{\sps})P^\tau(Y=v_k\mid Z_{\pa})}{\sum_{c=1}^C P^\tau(Z_{\ch}\mid Y=v_c,Z_{\sps})P^\tau(Y=v_c\mid Z_{\pa})}.
\]
\end{repproposition}

\begin{proof}
We have
\begin{flalign*}
P^\tau(Y=v_k\mid X)&=\frac{P^\tau(Y=v_k, X)}{P^\tau(X)}\\
&=\frac{P^\tau(Y=v_k, Z)}{P^\tau(Z)} && (\textrm{Change-of-variables})\\
&=P^\tau(Y=v_k\mid Z)\\
&=P^\tau(Y=v_k\mid Z_{\mb},Z_{\mb}^\complement)\\
&=P^\tau(Y=v_k\mid Z_{\mb})&& (\because Y\independent Z_{\mb}^\complement\mid Z_{\mb})\\
&=\frac{P^\tau(Y=v_k, Z_{\mb})}{P^\tau(Z_{\mb})}\\
&=\frac{P^\tau(Y=v_k, Z_{\mb})}{\sum_{c=1}^C P^\tau(Y=v_c,Z_{\mb})}\\
&=\frac{P^\tau(Y=v_k, Z_{\pa},Z_{\sps},Z_{\ch})}{\sum_{c=1}^C P^\tau(Y=v_c,Z_{\pa},Z_{\sps},Z_{\ch})}\\
&=\frac{P^\tau(Z_{\ch}\mid Y=v_k,Z_{\pa},Z_{\sps})P^\tau(Y=v_k\mid Z_{\pa},Z_{\sps})P^\tau(Z_{\pa},Z_{\sps})}{\sum_{c=1}^C P^\tau(Z_{\ch}\mid Y=v_c,Z_{\pa},Z_{\sps})P^\tau(Y=v_c\mid Z_{\pa},Z_{\sps})P^\tau(Z_{\pa},Z_{\sps})}\\
&=\frac{P^\tau(Z_{\ch}\mid Y=v_k,Z_{\sps})P^\tau(Y=v_k\mid Z_{\pa})}{\sum_{c=1}^C P^\tau(Z_{\ch}\mid Y=v_c,Z_{\sps})P^\tau(Y=v_c\mid Z_{\pa})}.
\end{flalign*}
In the last step, we use the conditional independence relations $Z_{\ch}\independent Z_{\pa}\mid Y,Z_{\sps}$ and $Y\independent Z_{\sps}\mid Z_{\pa}$.
\end{proof}

\subsection{Proof of \cref{corollary:conditional_distribution_from_X_to_Zhat}}\label{app:proof_conditional_distribution_from_X_to_Zhat}

\begin{repcorollary}{corollary:conditional_distribution_from_X_to_Zhat}
Consider the generative process in \cref{eq:data_generating_process}. Let $\hat{Z}_{\pa}$, $\hat{Z}_{\ch}$, and $\hat{Z}_{\sps}$ be invertible transformations of $Z_{\pa}$, $Z_{\ch}$, and $Z_{\sps}$, respectively. We have
\[
P^\tau(Y=v_k\mid X)=\frac{P^\tau(\hat{Z}_{\ch}\mid Y=v_k,\hat{Z}_{\sps})P^\tau(Y=v_k\mid \hat{Z}_{\pa})}{\sum_{c=1}^C P^\tau(\hat{Z}_{\ch}\mid Y=v_c,\hat{Z}_{\sps})P^\tau(Y=v_c\mid \hat{Z}_{\pa})}.
\]
\end{repcorollary}
\begin{proof}
By \cref{proposition:conditional_distribution_from_X_to_Z} and the change-of-variables formula, we have
\begin{flalign*}
P^\tau(Y=v_k\mid X)&=\frac{P^\tau(Z_{\ch}\mid Y=v_k,Z_{\sps})P^\tau(Y=v_k\mid Z_{\pa})}{\sum_{c=1}^C P^\tau(Z_{\ch}\mid Y=v_c,Z_{\sps})P^\tau(Y=v_c\mid Z_{\pa})}\\
&=\frac{
\frac{P^\tau(Z_{\ch}, Y=v_k,Z_{\sps})}{P^\tau(Y=v_k,Z_{\sps})}
\frac{P^\tau(Y=v_k, Z_{\pa})}{P^\tau(Z_{\pa})}
}{\sum_{c=1}^C
\frac{P^\tau(Z_{\ch}, Y=v_c,Z_{\sps})}{P^\tau(Y=v_c,Z_{\sps})}
\frac{P^\tau(Y=v_c, Z_{\pa})}{P^\tau(Z_{\pa})}
}\\
&=\frac{
\frac{P^\tau(\hat{Z}_{\ch}, Y=v_k,\hat{Z}_{\sps})}{P^\tau(Y=v_k,\hat{Z}_{\sps})}
\frac{P^\tau(Y=v_k, \hat{Z}_{\pa})}{P^\tau(\hat{Z}_{\pa})}
}{\sum_{c=1}^C
\frac{P^\tau(\hat{Z}_{\ch}, Y=v_c,\hat{Z}_{\sps})}{P^\tau(Y=v_c,\hat{Z}_{\sps})}
\frac{P^\tau(Y=v_c, \hat{Z}_{\pa})}{P^\tau(\hat{Z}_{\pa})}
}\\
&=\frac{P^\tau(\hat{Z}_{\ch}\mid Y=v_k,\hat{Z}_{\sps})P^\tau(Y=v_k\mid \hat{Z}_{\pa})}{\sum_{c=1}^C P^\tau(\hat{Z}_{\ch}\mid Y=v_c,\hat{Z}_{\sps})P^\tau(Y=v_c\mid \hat{Z}_{\pa})}.
\end{flalign*}
\end{proof}

\section{Proof of \cref{theorem:identifiability_target_distrubution}}\label{app:proof_identifiability_target_distrubution}
The proof of the following theorem is partly inspired by \citet{stojanov2019data}.
\ThmIdentifiabilityTargetDistribution*
\begin{proof}
We first have
\begin{flalign*}
P^\tau(\hat{Z}_{\ch}\mid \hat{Z}_{\pa}, \hat{Z}_{\sps})&=P^\text{new}(\hat{Z}_{\ch}\mid \hat{Z}_{\pa}, \hat{Z}_{\sps})\\
\frac{P^\tau(\hat{Z}_{\ch}, \hat{Z}_{\pa}, \hat{Z}_{\sps})}{P^\tau(\hat{Z}_{\pa}, \hat{Z}_{\sps})}&=\frac{P^\text{new}(\hat{Z}_{\ch}, \hat{Z}_{\pa}, \hat{Z}_{\sps})}{P^\text{new}(\hat{Z}_{\pa}, \hat{Z}_{\sps})}.
\end{flalign*}
By the change-of-variables formula and further simplifying, we have
\begin{flalign*}
\frac{P^\tau(Z_{\ch}, Z_{\pa}, Z_{\sps})}{P^\tau(Z_{\pa}, Z_{\sps})}&=\frac{P^\text{new}(Z_{\ch}, Z_{\pa}, Z_{\sps})}{P^\text{new}(Z_{\pa}, Z_{\sps})}\\
\frac{\sum_{c=1}^C P^\tau(Z_{\ch}, Z_{\pa}, Z_{\sps},Y=v_c)}{P^\tau(Z_{\pa}, Z_{\sps})}&=\frac{\sum_{c=1}^C P^\text{new}(Z_{\ch}, Z_{\pa}, Z_{\sps},Y=v_c)}{P^\text{new}(Z_{\pa}, Z_{\sps})}\\
\frac{\sum_{c=1}^C P^\tau(Z_{\ch}\mid Y=v_c,Z_{\sps})P^\tau(Y=v_c\mid Z_{\pa})P^\tau(Z_{\pa}, Z_{\sps})}{P^\tau(Z_{\pa}, Z_{\sps})}&=\frac{\sum_{c=1}^C P^\text{new}(Z_{\ch}\mid Y=v_c,Z_{\sps})P^\text{new}(Y=v_c\mid Z_{\pa})}{P^\text{new}(Z_{\pa}, Z_{\sps})P^\text{new}(Z_{\pa}, Z_{\sps})}\\
\sum_{c=1}^C P^\tau(Y=v_c\mid Z_{\pa})P^\tau(Z_{\ch}\mid Y=v_c,Z_{\sps})&=\sum_{c=1}^C P^\text{new}(Y=v_c\mid Z_{\pa})P^\text{new}(Z_{\ch}\mid Y=v_c,Z_{\sps}).
\end{flalign*}
Applying \cref{assumption:low_dimensional_changes} for $P^\tau(Z_{\ch}\mid Y=v_c,Z_{\sps})$ and $P^\text{new}(Z_{\ch}\mid Y=v_c,Z_{\sps})$, we obtain
\[
\sum_{c=1}^C P^\tau(Y=v_c\mid Z_{\pa})P^{\alpha_c^\tau}(Z_{\ch}\mid Y=v_c,Z_{\sps})=\sum_{c=1}^C P^\text{new}(Y=v_c\mid Z_{\pa})P^{\alpha_c^\text{new}}(Z_{\ch}\mid Y=v_c,Z_{\sps}),
\]
which implies
\[
\sum_{c=1}^C \left(P^\tau(Y=v_c\mid Z_{\pa})P^{\alpha_c^\tau}(Z_{\ch}\mid Y=v_c,Z_{\sps})- P^\text{new}(Y=v_c\mid Z_{\pa})P^{\alpha_c^\text{new}}(Z_{\ch}\mid Y=v_c,Z_{\sps})\right)=0.
\]
By \cref{assumption:linearly_independent_distributions}, we have
\begin{equation}\label{eq:proof_identifiability_target_distribution_c1}
P^\tau(Y=v_c\mid Z_{\pa})P^{\alpha_c^\tau}(Z_{\ch}\mid Y=v_c,Z_{\sps})- P^\text{new}(Y=v_c\mid Z_{\pa})P^{\alpha_c^\text{new}}(Z_{\ch}\mid Y=v_c,Z_{\sps})=0,
\end{equation}
which, by taking integral w.r.t. $Z_{\ch}$, indicates
\begin{equation}\label{eq:proof_identifiability_target_distribution_c2}
P^\tau(Y=v_c\mid Z_{\pa}) = P^\text{new}(Y=v_c\mid Z_{\pa}).
\end{equation}
Plugging the above equation into \cref{eq:proof_identifiability_target_distribution_c1} yields
\[
P^{\alpha_c^\tau}(Z_{\ch}\mid Y=v_c,Z_{\sps}) = P^{\alpha_c^\text{new}}(Z_{\ch}\mid Y=v_c,Z_{\sps}),
\]
or, equivalently,
\begin{equation}\label{eq:proof_identifiability_target_distribution_c3}
P^{\text{new}}(Z_{\ch}\mid Y=v_c,Z_{\sps})=P^\tau(Z_{\ch}\mid Y=v_c,Z_{\sps}).
\end{equation}
Applying change-of-variables formula to \cref{eq:proof_identifiability_target_distribution_c3,eq:proof_identifiability_target_distribution_c2}, we obtain
\[
P^\tau(\hat{Z}_{\ch}\mid Y,\hat{Z}_{\sps})=P^\text{new}(\hat{Z}_{\ch}\mid Y,\hat{Z}_{\sps}) \quad\text{and}\quad P^\tau(Y\mid \hat{Z}_{\pa})=P^\text{new}(Y\mid \hat{Z}_{\pa}).
\]
\end{proof}

\section{Experimental Details, Analysis and More Experiments}
\label{sec:ex_app}
\paragraph{Model details.} 
Our proposed approach adopts a hierarchical VAE architecture with the following detailed module designs. The domain size is $M$ and the number of categories is $C$. The \textbf{primary VAE encoder} consists of a fully connected layer (backbone features $\rightarrow$ hidden dimension) with batch normalization and ReLU activation, followed by two linear projections to generate mean $\mu$ and log-variance $\log\sigma^2$ for the latent space $Z \in \mathbb{R}^{d_{o} + d_{Z_{\pa}} + d_{Z_{\ch}} + d_{Z_{\sps}}}$, where $d_{o}$, $d_{Z_{\pa}}$, $d_{Z_{\ch}}$, and $d_{Z_{\sps}}$ denote the dimensions we set for $Z_{\mb}^\complement$, $Z_{\pa}$, $Z_{\ch}$, and $Z_{\sps}$, respectively. The \textbf{decoder} reconstructs features through a two-layer MLP (latent dimension $\rightarrow$ hidden dimension $\rightarrow$ backbone feature dimension) with batch normalization and ReLU. \textbf{Domain-specific embeddings} are implemented as learnable embedding layers: $\boldsymbol{\theta}_Y \in \mathbb{R}^{M \times d_{\theta_Y}}$ and $\boldsymbol{\theta}_{\ch} \in \mathbb{R}^{M \times d_{\theta_{\ch}}}$, where $d_{\theta_Y}$ and $d_{\theta_{\ch}}$ denote the dimensions we set for $\theta_Y$ and $\theta_{\ch}$, respectively. The \textbf{auxiliary VAE modules} use single linear layers in both the encoder and decoder, which operate on the concatenated vector $(\theta_Y, Z_{\pa}) \in \mathbb{R}^{d_{\theta_Y} + d_{Z_{\pa}}}$ to predict/reconstruct class distributions. Similarly, we use  encoder and decoder with linear layers to handle $(\theta_{\ch}, Y, Z_{\sps}) \in \mathbb{R}^{d_{\theta_{\ch}} + C + d_{Z_{\sps}}}$ for $Z_{ch}$ reconstruction. The \textbf{final classifier} is a two-layer MLP that processes concatenated features $(Z_{\pa}, Z_{\ch}, Z_{\sps}, \theta_Y, \theta_{\ch})$ through a hidden layer with ReLU activation to output class predictions. All backbone features undergo adaptive average pooling and flattening before processing.

\paragraph{Computing resources and efficiency.} We train our model using a NVIDIA A100-SXM4-40GB GPU. For the Office-Home dataset, the batch size is set to 32, and the model is trained for 70 epochs, which takes approximately 160 minutes. The peak memory usage is around 35 GB. The majority of the computational cost comes from the ResNet-50 backbone, as we only add several lightweight MLP layers after it.  For the PACS dataset, the batch size is set to 32, and the model is trained for 70 epochs, each epoch has 200 steps, which takes approximately 32 minutes. The peak memory usage is around 11 GB.

\paragraph{Visualization and standard deviation.} We have conducted visualizations of the latent space of features and VAE. Specifically, the t-SNE visualizations of the learned features on the Clipart task from the Office-Home dataset are available in \cref{fig:vis}, which demonstrate the effectiveness of our method at aligning the source and target domains while preserving discriminative structures. We also report the standard deviations for Office–Home and PACS datasets in Tables~\ref{tab:office-home-std} and \ref{tab:pacs-combined}, respectively. In particular, GAMA not only achieves the highest average accuracy but also exhibits very low variance, demonstrating its stable performance across different subtasks.

\begin{table}[ht]
  \centering
  \caption{Office–Home dataset results (accuracy $\pm$ std).}
  \label{tab:office-home-std}
  \begin{tabular}{lccccc}
    \toprule
    Method & Ar & Cl & Pr & Rw & Avg \\
    \midrule
    DAN \cite{long2015learning}
      & 68.3\,$\pm$\,0.5 & 57.9\,$\pm$\,0.7 & 78.5\,$\pm$\,0.1 & 81.9\,$\pm$\,0.4 & 71.6 \\
    Source Only \cite{he2016deep}
      & 64.6\,$\pm$\,0.7 & 52.3\,$\pm$\,0.6 & 77.6\,$\pm$\,0.2 & 80.7\,$\pm$\,0.8 & 68.8 \\
    DANN \cite{ganin2016domain}
      & 64.3\,$\pm$\,0.6 & 58.0\,$\pm$\,1.6 & 76.4\,$\pm$\,0.5 & 78.8\,$\pm$\,0.5 & 69.4 \\
    DCTN \cite{xu2018deep}
      & 66.9\,$\pm$\,0.6 & 61.8\,$\pm$\,0.5 & 79.2\,$\pm$\,0.6 & 77.8\,$\pm$\,0.6 & 71.4 \\
    MCD \cite{saito2018maximum}
      & 67.8\,$\pm$\,0.4 & 59.9\,$\pm$\,0.6 & 79.2\,$\pm$\,0.6 & 80.9\,$\pm$\,0.2 & 72.0 \\
    DANN+BSP \cite{chen2019transferability}
      & 66.1\,$\pm$\,0.3 & 61.0\,$\pm$\,0.4 & 78.1\,$\pm$\,0.3 & 79.9\,$\pm$\,0.1 & 71.3 \\
    M3SDA \cite{peng2019moment}
      & 66.2\,$\pm$\,0.5 & 58.6\,$\pm$\,0.6 & 79.5\,$\pm$\,0.5 & 81.4\,$\pm$\,0.2 & 71.4 \\
    iMSDA \cite{kong2022partial}
      & 75.4\,$\pm$\,0.9 & 61.4\,$\pm$\,0.7 & 83.5\,$\pm$\,0.2 & 84.5\,$\pm$\,0.4 & 76.2 \\
    \midrule
    \textbf{GAMA (Ours)}
      & {76.6\,$\pm$\,0.1} & {62.6\,$\pm$\,0.6} & {84.9\,$\pm$\,0.1} & {84.9\,$\pm$\,0.1} & {77.3} \\
    \bottomrule
  \end{tabular}
\end{table}

\begin{table}[ht]
  \centering
  \caption{PACS dataset results (accuracy $\pm$ std).}
  \label{tab:pacs-combined}
  \begin{tabular}{lccccc}
    \toprule
    Method & Art & Cartoon & Photo & Sketch & Avg \\
    \midrule
    Source Only \cite{he2016deep}
      & 74.9\,$\pm$\,0.88 & 72.1\,$\pm$\,0.75 & 94.5\,$\pm$\,0.58 & 64.7\,$\pm$\,1.53 & 76.6 \\
    DANN \cite{ganin2016domain}
      & 81.9\,$\pm$\,1.13 & 77.5\,$\pm$\,1.26 & 91.8\,$\pm$\,1.21 & 74.6\,$\pm$\,1.03 & 81.5 \\
    MDAN \cite{zhao2018adversarial}
      & 79.1\,$\pm$\,0.36 & 76.0\,$\pm$\,0.73 & 91.4\,$\pm$\,0.85 & 72.0\,$\pm$\,0.80 & 79.6 \\
    WBN \cite{mancini2018boosting}
      & 89.9\,$\pm$\,0.28 & 89.7\,$\pm$\,0.56 & 97.4\,$\pm$\,0.84 & 58.0\,$\pm$\,1.51 & 83.8 \\
    MCD \cite{saito2018maximum}
      & 88.7\,$\pm$\,1.01 & 88.9\,$\pm$\,1.53 & 96.4\,$\pm$\,0.42 & 73.9\,$\pm$\,3.94 & 87.0 \\
    M3SDA \cite{peng2019moment}
      & 89.3\,$\pm$\,0.42 & 89.9\,$\pm$\,1.00 & 97.3\,$\pm$\,0.31 & 76.7\,$\pm$\,2.86 & 88.3 \\
    CMSS \cite{yang2020curriculum}
      & 88.6\,$\pm$\,0.36 & 90.4\,$\pm$\,0.80 & 96.9\,$\pm$\,0.27 & 82.0\,$\pm$\,0.59 & 89.5 \\
    iMSDA \cite{kong2022partial}
      & 93.75\,$\pm$\,0.32 & 92.46\,$\pm$\,0.23 & 98.48\,$\pm$\,0.07 & 89.22\,$\pm$\,0.73 & 93.48 \\
    \midrule
    \textbf{GAMA (Ours)}
      & 98.77\,$\pm$\,0.11 & 93.73\,$\pm$\,0.75 & 92.81\,$\pm$\,0.40 & 89.27\,$\pm$\,0.68 & \textbf{93.65} \\
    \bottomrule
  \end{tabular}
\end{table}

\begin{figure}[!h]
\centering  
\subfloat[Our method.]{
    \includegraphics[width=0.49\textwidth]{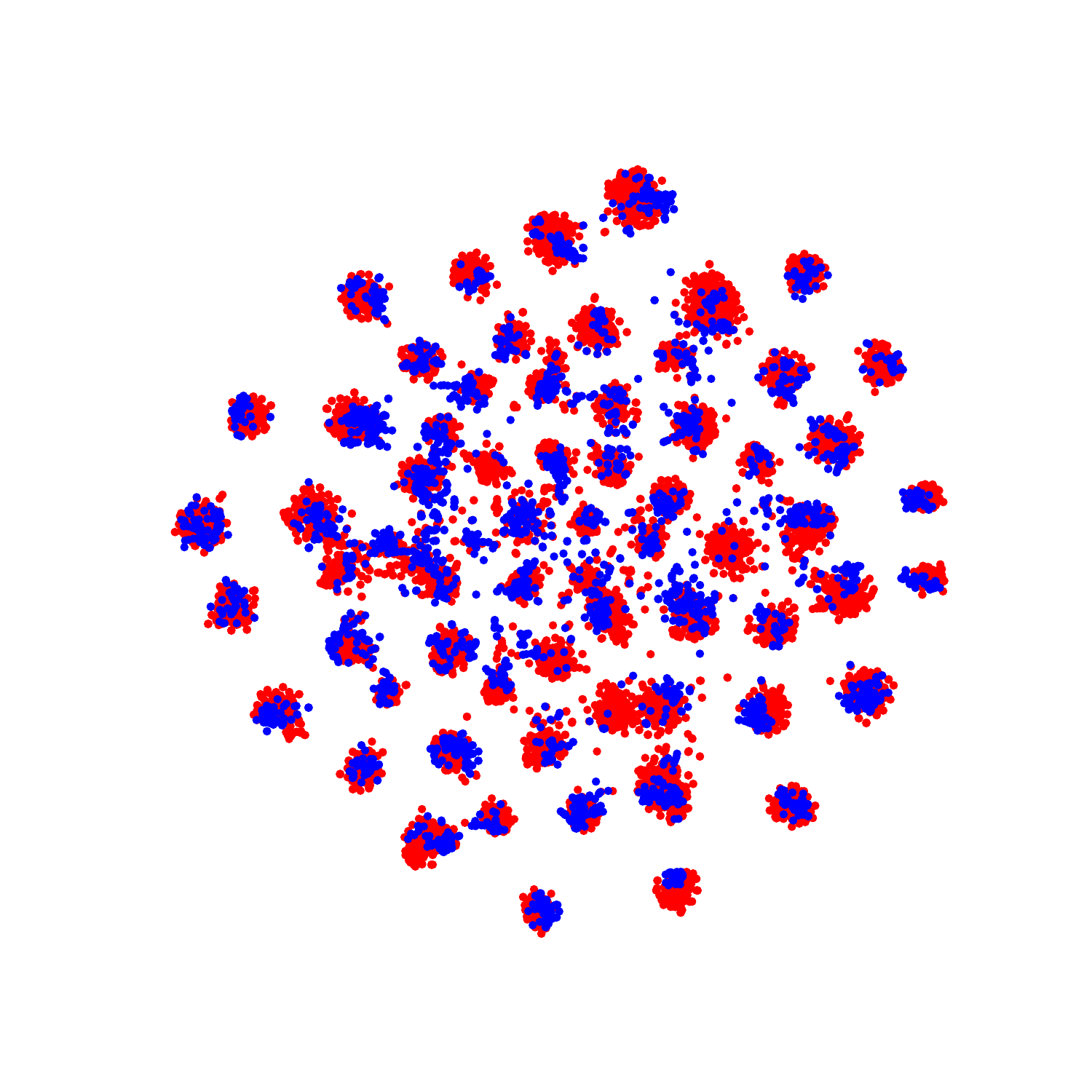}
}
\subfloat[The iMSDA method.]{
    \includegraphics[width=0.49\textwidth]{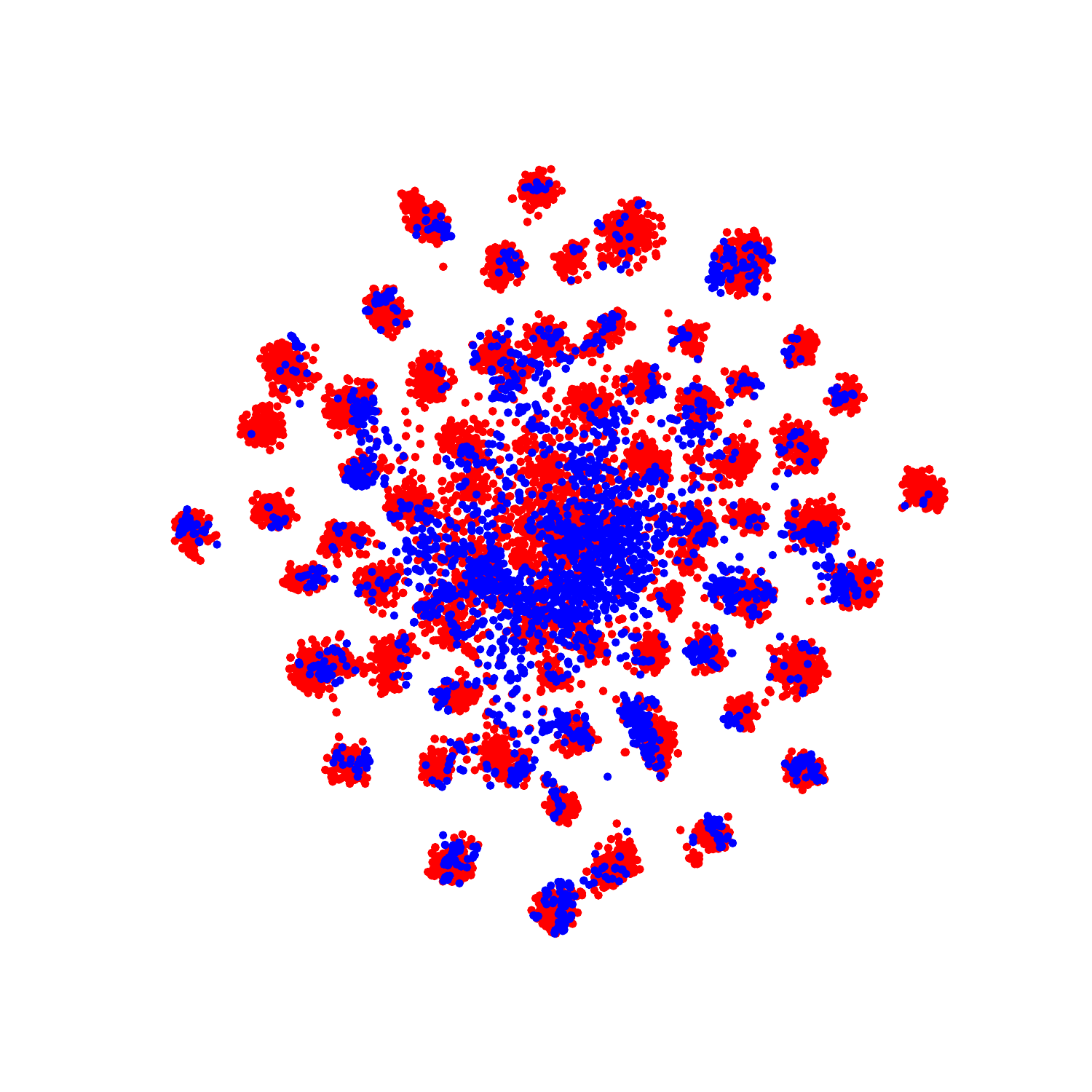}
}
\caption{The t-SNE visualizations of the learned features on the $\rightarrow$Clipart task in the Office-Home dataset. Specifically, red points indicate learned features form the source domains, while blue points indicate learned features from the target domain.}
\label{fig:vis}
\end{figure}

\begin{table*}[!t]
\centering
\caption{Hyperparameters for Office-Home (Ar, Cl, Pr, Rw) and PACS (P, A, C, S) datasets.}
\label{table1}
\begin{tabular}{c|cccc|cccc}
\hline
\multirow{2}{*}{\textbf{Parameter}}
& \multicolumn{4}{c|}{\textbf{Office-Home}} 
& \multicolumn{4}{c}{\textbf{PACS}} \\
\cline{2-9}
& \textbf{Ar} & \textbf{Cl} & \textbf{Pr} & \textbf{Rw} & \textbf{P} & \textbf{A} & \textbf{C} & \textbf{S} \\
\hline
$\lambda_1$
& $2\times 10^{-3}$ & $6\times 10^{-4}$ & $3\times 10^{-3}$ & $6\times 10^{-4}$
& $7\times 10^{-4}$    & $3\times 10^{-3}$    & $5\times 10^{-3}$    & $9\times 10^{-3}$ \\

$\lambda_2$
& $4\times 10^{-4}$ & $2\times 10^{-4}$ & $3\times 10^{-3}$ & $1\times 10^{-4}$
& $4\times 10^{-4}$    & $1\times 10^{-4}$    & $1\times 10^{-4}$    & $1\times 10^{-3}$ \\

$\lambda_3$
& $1\times 10^{-4}$ & $8\times 10^{-4}$ & $1\times 10^{-3}$ & $4\times 10^{-3}$
& $5\times 10^{-4}$    & $2\times 10^{-3}$    & $2\times 10^{-4}$    & $7\times 10^{-4}$ \\

$\lambda_4$
& $5\times 10^{-3}$ & $6\times 10^{-4}$ & $7\times 10^{-3}$ & $1\times 10^{-3}$
& $1\times 10^{-3}$    & $2\times 10^{-4}$    & $4\times 10^{-4}$    & $5\times 10^{-3}$ \\

$\lambda_5$
& $2\times 10^{-3}$ & $4\times 10^{-4}$ & $3\times 10^{-4}$ & $4\times 10^{-3}$
& $4\times 10^{-3}$    & $9\times 10^{-4}$    & $4\times 10^{-3}$    & $5\times 10^{-3}$ \\
\hline
\end{tabular}
\end{table*}

%%%%%%%%%%%%%%%%%%%%%%%%%%%%%%%%%%%%%%%%%%%%%%%%%%%%%%%%%%%%

\end{document}